\tikzstyle{black little dot}=[fill=black, draw=black, tikzit category=string diagram, shape=circle, radius=0.1cm]
\tikzstyle{medium box}=[fill=white, draw=black, shape=rectangle, tikzit category=string diagram, minimum width=1cm, minimum height=.75cm, tikzit draw=black, tikzit shape=rectangle, thick]
\tikzstyle{small box}=[fill=white, draw=black, shape=rectangle, tikzit category=string diagram, tikzit draw=black, tikzit shape=rectangle, minimum width=.5cm, minimum height=.5cm, thick]
\tikzstyle{small circle}=[fill=white, draw=black, shape=circle, tikzit category=string diagram, tikzit draw=black, tikzit shape=circle, thick]
\tikzstyle{circle}=[fill=white, draw=black, shape=circle, tikzit category=string diagram, tikzit draw=black, tikzit shape=circle, radius=.1cm, thick]
\tikzstyle{wide small box}=[fill=white, draw=black, shape=rectangle, tikzit category=string diagram, tikzit fill=white, tikzit draw=black, minimum width=1cm, minimum height=.5cm, thick]
\tikzstyle{wire}=[-, draw=black, tikzit draw=black, thick]
\tikzstyle{dashline}=[-, tikzit category=string diagram, thick, dashed]
\tikzstyle{dotsline}=[-, dotted]
\tikzstyle{cyanthinline}=[-, draw=cyan]
\patchcommand\@starttoc{\begin{quote}}{\end{quote}}
\def\@tocline#1#2#3#4#5#6#7{\relax
  \ifnum #1>\c@tocdepth % then omit
  \else
    \par \addpenalty\@secpenalty\addvspace{#2}%
    \begingroup \hyphenpenalty\@M
    \@ifempty{#4}{%
      \@tempdima\csname r@tocindent\number#1\endcsname\relax
    }{%
      \@tempdima#4\relax
    }%
    \parindent\z@ \leftskip#3\relax \advance\leftskip\@tempdima\relax
    \rightskip\@pnumwidth plus4em \parfillskip-\@pnumwidth
    #5\leavevmode\hskip-\@tempdima
      \ifcase #1
       \or\or \hskip 1em \or \hskip 2em \else \hskip 3em \fi%
      #6\nobreak\relax
    \dotfill\hbox to\@pnumwidth{\@tocpagenum{#7}}\par
    \nobreak
    \endgroup
  \fi}
 \theoremstyle{plain}
 \newtheorem{thm}{Theorem}[section]
 \newtheorem{cor}[thm]{Corollary}
 \newtheorem{lem}[thm]{Lemma}
 \newtheorem{prop}[thm]{Proposition}
\theoremstyle{definition}
 \newtheorem{defn}[thm]{Definition}
\theoremstyle{remark}
 \newtheorem{rem}[thm]{Remark}
 \newtheorem{ter}[thm]{Terminology}
 \newtheorem{nota}[thm]{Notation}
 \newtheorem{exam}[thm]{Example}
 \numberwithin{equation}{section}
\theoremstyle{plain}
 \DeclareMathOperator{\dom}{dom}
 \DeclareMathOperator{\cod}{cod}
\def\XXint#1#2#3{{\setbox0=\hbox{$#1{#2#3}{\int}$}
\vcenter{\hbox{$#2#3$}}\kern-.5\wd0}}
\newcommand{\0}{\emptyset}
\DeclareMathAlphabet{\mathpzc}{OT1}{pzc}{m}{it}
\providecommand\given{}
\newcommand\SetSymbol[1][]{%
\nonscript \: #1 \vert
\allowbreak
\nonscript\:
\mathopen{}}
\DeclarePairedDelimiterX\set[1]\{\}{%
\renewcommand\given{\SetSymbol[\delimsize]}
#1
}
\newbox\gnBoxA
\newdimen\gnCornerHgt
\newdimen\gnArgHgt
\def\code #1{%
        \setbox\gnBoxA=\hbox{$#1$}%
        \gnArgHgt=\ht\gnBoxA%
        \ifnum \gnArgHgt<\gnCornerHgt
                \gnArgHgt=0pt%
        \else
                \advance \gnArgHgt by -\gnCornerHgt%
        \fi
        \raise\gnArgHgt\hbox{$\ulcorner$} \box\gnBoxA %
                \raise\gnArgHgt\hbox{$\urcorner$}}
\newcommand{\fun}{\longrightarrow}
\newcommand{\sub}{\subseteq}
\newcommand{\mi}{\smallsetminus}
\newcommand{\cat}[1]{\textup{\textsf{#1}}}
\DeclareMathOperator{\cau}{\sf{Cau}}
\DeclareMathOperator{\pa}{pa}
\DeclareMathOperator{\nd}{nd}
\newcommand{\memph}[1]{{\color{magenta}\emph{#1}}}
\newcommand{\moi}{\mathbf{I}}
\newcommand{\stri}{\Gamma}
\title{Markov categories, causal theories, and the do-calculus}
\author[Y. Yin]{Yimu Yin}
\address[Y. Yin]{Pasadena, California}
\email{yimu.yin@hotmail.com}
\author[J. Zhang]{Jiji Zhang}
\address[J. Zhang]{Department of Religion and Philosophy,  Hong Kong Baptist University}
\email{zhangjiji@hkbu.edu.hk}
\begin{document}

\begin{abstract}
We give a category-theoretic treatment of causal models that formalizes the syntax for causal reasoning over a directed acyclic graph (DAG) by associating a free Markov category with  the DAG in a canonical way.  This framework enables us to define and study important concepts in causal reasoning from an abstract and ``purely causal'' point of view,  such as causal independence/separation,  causal conditionals,  and decomposition of intervention effects.  Our results regarding these concepts abstract away from the details of the commonly adopted causal models such as (recursive) structural equation models or causal Bayesian networks.  They are therefore more widely applicable and in a way conceptually clearer.  Our results are also intimately related to Judea Pearl's celebrated do-calculus,  and yield a syntactic version of a core part of the calculus that is inherited in all causal models.  In particular,  it induces a simpler and specialized version of Pearl's do-calculus in the context of causal Bayesian networks,  which we show is as strong as the full version.

\end{abstract}

\maketitle

\section{Introduction}
\label{sec:intro}

Causal models based on directed acyclic graphs (DAGs), such as recursive structural equation models \cite{GallesPearl1998, Halpern2000} and causal Bayesian networks \cite{SGS2000, pearl2009causality},  have been vigorously studied and widely applied as powerful tools for causal reasoning.  However,  from a logical point of view,  the syntax underlying such causal models is usually left implicit or even obscure in the literature.  This lacuna is fixed in recent category-theoretic work on the subject \cite{Fong:thesis, jacobs2019causal}, where the distinction between syntax and semantics is made clear in the style of F.W. Lawvere's functorial semantics \cite{lawvere1963functorial}. Specifically, the syntax is provided by a monoidal category of a certain kind induced by a DAG, and a distinguished class of morphisms therein can be viewed as {\it syntactic} causal effects, which may then be interpreted in various ways. Causal Bayesian networks, for example, interpret a causal effect of this kind with a stochastic matrix that represents probability distributions over the outcome-variables resulting from interventions on the treatment-variables. In this light, a causal Bayesian network is a functor---a structure-preserving mapping---from the syntax category to a category whose morphisms are stochastic matrices. As another example (and a degenerate case of causal Bayesian networks), deterministic structural equation models interpret a causal effect of this kind as a function that represents how the values of the outcome-variables depend on those of the treatment-variables. Thus, a (recursive) deterministic structural equation model may be viewed as a functor from the syntax category to a category whose morphisms are functions.

In this paper we build on this category-theoretic framework and study some important concepts in DAG-based causal reasoning from a syntactic and more abstract perspective. In particular, we work with the categories defined in \cite{Fong:thesis}, called {\it causal theories}, with an extra constraint to make them Markov categories in the sense of \cite{fritz2020synthetic}. We study the morphisms in that category that correspond to what we call syntactic causal effects, using the graphical language of string diagrams as vehicles for our arguments. One of our main results concerns the decomposition or disintegration of causal effect morphisms, or in the terminology of \cite{fritz2020synthetic}, the existence of a conditional for a causal effect morphism. Roughly, this refers to the property that the causal effect of $x$ on $y$ and $z$ can be decomposed into the causal effect of $x$ on $y$ followed by that of $x$ and $y$ on $z$. We derive a condition that is sufficient and necessary for decomposability in a causal theory.  Interestingly, the condition is precisely the condition in a specialized version of the second rule of Judea Pearl's do-calculus \cite{pearl2009causality}. This agreement is of course not a coincidence and has, we submit, several instructive implications. The other rules of the do-calculus have more straightforward counterparts in terms of causal effect morphisms, and the upshot is a generic do-calculus at the syntactic level.

This generic calculus, we argue, captures the ``causal core'' of reasoning about interventions, and is automatically inherited in all causal models, including but not limited to the popular probabilistic ones. In particular, it induces a simpler and specialized version of Pearl's do-calculus in the context of causal Bayesian networks. Importantly, we show that given the probability calculus, the simpler and specialized version entails the full version of the do-calculus, corroborating our contention that the generic do-calculus corresponds to the purely causal component of the well-known probabilistic do-calculus.

The rest of the paper is organized as follows. In Section \ref{sec:cat}, we introduce the basics of category theory and the intuitive language of string diagrams, leading to the notion of a Markov category. In Section \ref{sec:cau}, we define causal theories as an abstraction of causal Bayesian networks and as free Markov categories, and highlight a class of morphisms constructed in \cite{Fong:thesis}, which we call ``causal effect morphisms''. Section \ref{sec:effect} presents some results about causal effect morphisms, which yield a more abstract and syntactic counterpart to Pearl's do-calculus. We show in Section \ref{sec:cal} that the syntactic do-calculus entails a simpler and specialized version of probabilistic do-calculus, and that, despite its simplicity, the specialized version is actually as strong as the full version.

\section{Markov categories}
\label{sec:cat}
For the sake of space and readability, we will only describe the notions of category theory that are essential for understanding this paper, and introduce the axioms for a Markov category using the language of string diagrams. For readers interested in learning more about category theory and string diagrams, we recommend the canonical treatises \cite{MacLane1998} and  \cite{selinger2010survey}, among other excellent textbooks and surveys.

A category $\cat C$ consists of two types of entities: \memph{objects} $A$, $B$, $C$, $\ldots$ and \memph{arrows} $f$, $g$, $h$, $\ldots$, subject to the following three rules:
\begin{itemize}
 \item For each arrow $f$ there are given two objects $\dom(f)$ and $\cod(f)$, called the \memph{domain} and the \memph{codomain} of $f$. We usually write $f :A \fun B$ to indicate that $A = \dom(f)$ and  $B = \cod(f)$.
 \item Given two arrows $f : A \fun B$ and $g: B \fun C$, that is, $\cod(f) = \dom(g)$, there is a third arrow $g \circ f : A \fun C$, called the \memph{composition} of $f$ and $g$.
 \item  For each object $A$ there is an arrow $1_A : A \fun A$, called the \memph{identity} or \memph{unit} arrow of $A$.
\end{itemize}
In addition, the obvious \memph{unitality} and \memph{associativity} laws hold for compositions: for all $f : A \fun B$, $g : B \fun C$, and $h : C \fun D$,
\[
1_B \circ f = f, \quad f \circ 1_A = f, \quad (h \circ g) \circ f = h \circ (g \circ f).
\]

An arrow in category theory is also called a \memph{morphism} or a \memph{map}. Here is a more formal definition:

\begin{defn}\label{cat:def:2}
Let $\cat C$ be a quadruple $(\cat C_0, \cat C_1, \dom, \cod)$, where $\cat C_0$ is referred to as a class of \memph{objects},  $\cat C_1$ is referred to as a class of \memph{morphisms}, and $\dom : \cat C_1 \fun \cat C_0$, $\cod : \cat C_1 \fun \cat C_0$ are functions. A morphism $f$ in $\cat C_1$ is usually written as $f : A \fun B$ with $\dom(f) = A$ and $\cod(f) = B$. For each pair of objects $A$, $B$ in $\cat C_0$, the class of all morphisms $f$ with  $\dom(f) = A$ and $\cod(f) = B$ is denoted by $\hom_{\cat C}(A,B)$.

Let $\cat C_2 = \set{(f, g) \in \cat C_1 \times \cat C_1 \given \cod(f) = \dom (g)}$. We say that $\cat C$ is a \memph{category} if it also comes with a morphism $1_A : A \fun A$ for every $A \in \cat C_0$, called the \memph{identity morphism} of $A$, and a function $\circ : \cat C_2 \fun \cat C_1$, called \memph{composition}, subject to the associativity and unitality laws given above.
\end{defn}

Often we just write $x \in \cat C$ and let the context determine whether $x$ is an object or a morphism.

A paradigmatic example of a category is the category \cat{Set}, containing sets as objects and functions as morphisms. In this category, the composition of morphisms is just the composition of functions and for each object $A$, the identity morphism is just the identity function.

It is helpful to think of a morphism as an abstract function, or a box with input wires and output wires, as in the graphical language of string diagrams.
The four rudiments of a category are depicted in such a graphical language as follows:
\begin{equation}\label{free:graph:def}
\begin{tikzpicture}[xscale = .5, yscale = .5, baseline=(current  bounding  box.center)]
\begin{pgfonlayer}{nodelayer}
		\node [style=none] (42) at (2.6, 2.5) {$A$};
		\node [style=none] (110) at (10.275, 2.15) {};
		\node [style=none] (113) at (10.275, 0.825) {};
		\node [style=small box] (156) at (10.275, 2.5) {$f$};
		\node [style=none] (157) at (10.275, 4.3) {};
		\node [style=none] (158) at (10.275, 2.925) {};
		\node [style=small box] (166) at (24.475, 1.75) {$f$};
		\node [style=none] (167) at (24.475, 3.65) {};
		\node [style=none] (168) at (24.475, 2.175) {};
		\node [style=none] (192) at (3.125, 3.7) {};
		\node [style=none] (193) at (3.125, 1.3) {};
		\node [style=none] (194) at (3.125, -0.5) {object $A$};
		\node [style=none] (195) at (9.725, 1.2) {$A$};
		\node [style=none] (196) at (9.725, 3.8) {$B$};
		\node [style=none, label={[align=center]center:morphism \\$f : A \fun B$}] (197) at (10.275, -0.75) {};
		\node [style=none] (198) at (17.3, 2.5) {$A$};
		\node [style=none] (199) at (17.875, 3.7) {};
		\node [style=none] (200) at (17.875, 1.3) {};
		\node [style=none] (201) at (17.875, -0.5) {identity $1_A$};
		\node [style=small box] (203) at (24.475, 4) {$g$};
		\node [style=none] (204) at (24.475, 5.65) {};
		\node [style=none] (205) at (24.475, 4.425) {};
		\node [style=none] (206) at (24.475, 1.4) {};
		\node [style=none] (207) at (24.475, 0.175) {};
		\node [style=none, label={[align=center]center:composition \\ $g \circ f$}] (208) at (24.475, -1.3) {};
		\node [style=none] (209) at (23.975, 0.5) {$A$};
		\node [style=none] (210) at (23.975, 2.925) {$B$};
		\node [style=none] (211) at (23.975, 5.05) {$C$};
	\end{pgfonlayer}
	\begin{pgfonlayer}{edgelayer}
		\draw [style=wire] (110.center) to (113.center);
		\draw [style=wire] (157.center) to (158.center);
		\draw [style=wire] (167.center) to (168.center);
		\draw [style=wire] (192.center) to (193.center);
		\draw [style=wire] (199.center) to (200.center);
		\draw [style=wire] (204.center) to (205.center);
		\draw [style=wire] (206.center) to (207.center);
	\end{pgfonlayer}
\end{tikzpicture}
\end{equation}
Note that string diagrams are parsed in the lower-left to upper-right order.

\begin{rem}\label{intro:string}
A string diagram is a topological graph  in which every edge is labelled with an object and every vertex with a morphism \cite{selinger2010survey}. Object labels such as $A$, $B$ are usually omitted except when they are needed for clarity or emphasis. A labelled vertex is also called a \memph{node}, and is often drawn as a box such as
$\begin{tikzpicture}[xscale = 1, yscale = 1, baseline={([yshift=7pt]current bounding box.south)}]
   \begin{pgfonlayer}{nodelayer}
		\node [style=small box] (0) at (0, 0) {$f$};
	\end{pgfonlayer}
\end{tikzpicture}$ for readability. Just as in the usual symbolic formalism, a morphism $f$ may be represented by many string diagrams.
\end{rem}

Categories may serve as objects in a ``higher'' category, and the morphisms between categories are known as functors:

\begin{defn}\label{def:func}
Let $\cat C$, $\cat D$ be categories and $F$ a pair of mappings $F_0 : \cat C_0 \fun \cat D_0$, $F_1 : \cat C_1 \fun \cat D_1$. Then $F$ is a \memph{functor}, written as $F : \cat C \fun \cat D$,  if the following three conditions, corresponding to the three conditions for a category, are satisfied:
\begin{itemize}
 \item $F$ preserves domains and codomains, that is, $F_1(f: A \fun B)$ is a morphism $F_0(A) \fun F_0(B)$ for all morphisms $f \in \cat C_1$.
 \item  $F$ preserves compositions, that is, $F_1(g \circ f) = F_1(g) \circ F_1(f)$ for all compositions $g \circ f \in \cat C_1$.
 \item  $F$ preserves identities, that is, $F_1(1_A) = 1_{F_0(A)}$ for all objects $A \in \cat C_0$.
\end{itemize}
\end{defn}
Compositions of functors may be defined using composition of mappings. Then it is routine to check that categories and functors  form a ``higher'' category.

We now introduce more structures needed for our purpose. Start with the (strict) monoidal structure:
\begin{defn}\label{defn:moncat}
A \memph{strict monoidal category} is a category $\cat C$ equipped with a functor $\otimes : \cat C \times \cat C \to \cat C$, called the \memph{monoidal product}, and a distinguished object $\moi$, called the \memph{monoidal unit} (of the monoidal product), that satisfy
associativity and unitality:
\begin{equation}\label{monoidal:def}
     (A\otimes B)\otimes C = A\otimes (B\otimes C), \quad
    \moi\otimes A = A = A\otimes \moi
  \end{equation}
\end{defn}

Many commonly encountered monoidal categories are actually not strict because equation \ref{monoidal:def} holds only ``up to isomorphism.'' For example, the category \cat{Set} has an obvious monoidal product, which is just the Cartesian product (of sets and of functions). The monoidal unit is any singleton set, but unitality is a matter of isomorphism rather than strict identity. The formal definition of a (possibly non-strict) monoidal category is rather more complex and requires the notion of a natural transformation, which we omit to keep things simple. The monoidal categories we will focus on in this paper are strict.\footnote{By S. Mac Lane's coherence theorem \cite[Theorem XI.3.1]{MacLane1998}, every monoidal category is monoidally equivalent to a strict
monoidal category. So there is a sense in which we can treat monoidal categories as if they are all strict (even though they are not). See \cite{selinger2010survey} for more on how this is justified.}

Since the monoidal product is a functor, it applies to both objects and morphisms in the category. Thus the graphical syntax in (\ref{free:graph:def}) is extended for monoidal categories as follows:
\begin{equation}\label{mon:free:graph:def}
\begin{tikzpicture}[xscale = .55, yscale = .5, baseline=(current  bounding  box.center)]
\begin{pgfonlayer}{nodelayer}
		\node [style=none] (42) at (1.6, 2.5) {$A$};
        \node [style=none] (212) at (3.1, 2.5) {$B$};
		\node [style=none] (110) at (16.9, 2.15) {};
		\node [style=none] (113) at (16.9, 0.825) {};
		\node [style=wide small box] (156) at (17.525, 2.5) {$f$};
		\node [style=none] (157) at (16.9, 4.3) {};
		\node [style=none] (158) at (16.9, 2.925) {};
		\node [style=small box] (166) at (24.475, 2.475) {$f$};
		\node [style=none] (167) at (24.475, 4.125) {};
		\node [style=none] (168) at (24.475, 2.9) {};
		\node [style=none] (192) at (2.125, 3.7) {};
		\node [style=none] (193) at (2.125, 1.3) {};
		\node [style=none, label={[align=center]center:monoidal product\\of objects}] (194) at (2.875, -0.5) {};
		\node [style=none] (195) at (16.35, 1.2) {$A_1$};
		\node [style=none] (196) at (16.35, 3.8) {$B_1$};
		\node [style=none, label={[align=center]center:monoidal unit\\(empty diagram)}] (197) at (10.275, -0.475) {};
		\node [style=none, label={[align=center]center:morphism\\$f : \bigotimes_i A_i \fun \bigotimes_j B_j$}] (201) at (17.625, -1) {};
		\node [style=none] (206) at (24.475, 2.125) {};
		\node [style=none] (207) at (24.475, 0.9) {};
		\node [style=none, label={[align=center]center:monoidal product\\of morphisms}] (208) at (25.275, -1) {};
		\node [style=none] (209) at (23.975, 1.225) {$A$};
		\node [style=none] (210) at (23.975, 3.55) {$B$};
		\node [style=none] (213) at (3.625, 3.7) {};
		\node [style=none] (214) at (3.625, 1.3) {};
		\node [style=none] (215) at (18.15, 2.15) {};
		\node [style=none] (216) at (18.15, 0.825) {};
		\node [style=none] (217) at (18.15, 4.3) {};
		\node [style=none] (218) at (18.15, 2.925) {};
		\node [style=none] (219) at (18.75, 1.2) {$A_n$};
		\node [style=none] (220) at (18.75, 3.8) {$B_m$};
		\node [style=none] (221) at (17.6, 1.2) {$\cdots$};
		\node [style=none] (222) at (17.6, 3.8) {$\cdots$};
		\node [style=small box] (224) at (25.975, 2.5) {$g$};
		\node [style=none] (225) at (25.975, 4.15) {};
		\node [style=none] (226) at (25.975, 2.925) {};
		\node [style=none] (227) at (25.475, 3.55) {$D$};
		\node [style=none] (228) at (25.975, 2.125) {};
		\node [style=none] (229) at (25.975, 0.9) {};
		\node [style=none] (230) at (25.475, 1.225) {$C$};
	\end{pgfonlayer}
	\begin{pgfonlayer}{edgelayer}
		\draw [style=wire] (110.center) to (113.center);
		\draw [style=wire] (157.center) to (158.center);
		\draw [style=wire] (167.center) to (168.center);
		\draw [style=wire] (192.center) to (193.center);
		\draw [style=wire] (206.center) to (207.center);
		\draw [style=wire] (213.center) to (214.center);
		\draw [style=wire] (215.center) to (216.center);
		\draw [style=wire] (217.center) to (218.center);
		\draw [style=wire] (225.center) to (226.center);
		\draw [style=wire] (228.center) to (229.center);
	\end{pgfonlayer}
\end{tikzpicture}
\end{equation}

\begin{nota}
We denote by $A^n$ the monoidal product of $n$ copies of an object  $A$ itself; this includes the empty product $A^0 = \moi$. When an object of the form $A_1 \otimes \ldots \otimes A_k$ is introduced in the discussion, the indices are in general meant to indicate the ordering in which the monoidal product is taken.
\end{nota}
For the present purpose, the monoidal structure is especially useful because it can be used to express causal processes or mechanisms that run in parallel, as is visualized in (\ref{mon:free:graph:def}), whereas composition is used to express those that run in sequence.

A \memph{symmetric} monoidal category is a monoidal category with natural isomorphisms
$\sigma_{AB} : A \otimes B \cong B \otimes A$ that satisfy certain coherence conditions (the details do not matter for the present purpose). Graphically, a symmetry isomorphism is depicted as a crossing:
\begin{equation}\label{sym:graph}
\begin{tikzpicture}[xscale = .5, yscale = .4, baseline=(current  bounding  box.center)]
\begin{pgfonlayer}{nodelayer}
		\node [style=none] (110) at (-2.475, 4.6) {};
		\node [style=none] (112) at (-2.475, 3.775) {};
		\node [style=none] (139) at (-2.475, 6.625) {};
		\node [style=none] (141) at (-2.475, 5.775) {};
		\node [style=none] (246) at (-0.775, 4.55) {};
		\node [style=none] (247) at (-0.775, 3.775) {};
		\node [style=none] (249) at (-0.775, 6.625) {};
		\node [style=none] (250) at (-0.775, 5.775) {};
		\node [style=none] (254) at (-0.25, 4.125) {$B$};
		\node [style=none] (255) at (-3, 4.125) {$A$};
		\node [style=none] (256) at (-0.25, 6.25) {$A$};
		\node [style=none] (257) at (-3, 6.25) {$B$};
		\node [style=none, label={[align=center]center:symmetry $\sigma_{AB}$}] (258) at (-1.5, 2.8) {};
	\end{pgfonlayer}
	\begin{pgfonlayer}{edgelayer}
		\draw [style=wire] (112.center) to (110.center);
		\draw [style=wire] (141.center) to (139.center);
		\draw [style=wire] (247.center) to (246.center);
		\draw [style=wire] (250.center) to (249.center);
		\draw [style=wire, in=90, out=-90, looseness=0.75] (250.center) to (110.center);
		\draw [style=wire, in=90, out=-90, looseness=0.75] (141.center) to (246.center);
	\end{pgfonlayer}
\end{tikzpicture}
\end{equation}

Finally, we can define a Markov category, following the lead of \cite{fritz2020synthetic}.
\begin{defn}
A \memph{Markov category} is a symmetric monoidal category such that for each  object $A$ in $\cat C$ there are distinguished morphisms $\delta_A : A \fun A \otimes A$, called the \memph{duplicate} on $A$, and $\epsilon_A : A \fun \moi$, called the \memph{discard} on $A$, that satisfy the following:
\begin{equation}\label{mon:law}
\begin{tikzpicture}[xscale = .65, yscale = .6, baseline=(current  bounding  box.center)]
\begin{pgfonlayer}{nodelayer}
		\node [style=none] (139) at (-1.625, 5.45) {};
		\node [style=none] (141) at (-1.625, 5.325) {};
		\node [style=none] (144) at (-2.1, 6.125) {};
		\node [style=none] (146) at (-2.1, 5.825) {};
		\node [style=none] (246) at (-0.925, 4.8) {};
		\node [style=none] (247) at (-0.925, 4.425) {};
		\node [style=none] (249) at (-0.2, 6.1) {};
		\node [style=none] (250) at (-0.2, 5.325) {};
		\node [style=none] (251) at (-1.15, 6.125) {};
		\node [style=none] (252) at (-1.15, 5.825) {};
		\node [style=none] (281) at (1.15, 5.175) {$=$};
		\node [style=none] (298) at (3.85, 5.45) {};
		\node [style=none] (299) at (3.85, 5.325) {};
		\node [style=none] (300) at (4.325, 6.125) {};
		\node [style=none] (301) at (4.325, 5.825) {};
		\node [style=none] (302) at (3.15, 4.8) {};
		\node [style=none] (303) at (3.15, 4.425) {};
		\node [style=none] (304) at (2.425, 6.1) {};
		\node [style=none] (305) at (2.425, 5.325) {};
		\node [style=none] (306) at (3.375, 6.125) {};
		\node [style=none] (307) at (3.375, 5.825) {};
		\node [style=none] (314) at (9.6, 5.175) {$=$};
		\node [style=none] (315) at (11.875, 5.175) {$=$};
		\node [style=none] (316) at (10.75, 6.1) {};
		\node [style=none] (317) at (10.75, 4.425) {};
		\node [style=none] (318) at (13.05, 6) {$\bullet$};
		\node [style=none] (319) at (13.05, 5.575) {};
		\node [style=none] (320) at (13.75, 5.05) {};
		\node [style=none] (321) at (13.75, 4.425) {};
		\node [style=none] (322) at (14.475, 6.1) {};
		\node [style=none] (323) at (14.475, 5.575) {};
		\node [style=none] (308) at (15.95, 6.25) {};
		\node [style=none] (309) at (17.375, 5.2) {};
		\node [style=none] (310) at (16.675, 4.65) {};
		\node [style=none] (311) at (16.675, 4.275) {};
		\node [style=none] (312) at (17.375, 6.25) {};
		\node [style=none] (313) at (15.95, 5.2) {};
		\node [style=none] (324) at (8.4, 6) {$\bullet$};
		\node [style=none] (325) at (8.4, 5.575) {};
		\node [style=none] (326) at (7.7, 5.05) {};
		\node [style=none] (327) at (7.7, 4.425) {};
		\node [style=none] (328) at (6.975, 6.1) {};
		\node [style=none] (329) at (6.975, 5.575) {};
		\node [style=none] (330) at (20.125, 6.1) {};
		\node [style=none] (331) at (20.125, 5.575) {};
		\node [style=none] (332) at (20.825, 5.05) {};
		\node [style=none] (333) at (20.825, 4.425) {};
		\node [style=none] (334) at (21.55, 6.1) {};
		\node [style=none] (335) at (21.55, 5.575) {};
		\node [style=none] (336) at (18.8, 5.175) {$=$};
		\node [style=none, label={[align=center]center: coassociativity\\$(\delta_A \otimes 1_A) \circ \delta_A =  (1_A \otimes \delta_A) \circ \delta_A$}] (337) at (1.25, 3.175) {};
		\node [style=none, label={[align=center]center:counitality\\$(1_A \otimes \epsilon_A) \circ \delta_A = 1_A = (\epsilon_A \otimes 1_A) \circ \delta_n$}] (338) at (10.95, 3.175) {};
		\node [style=none, label={[align=center]center:cocommutativity\\$\sigma_{AA} \circ \delta_A = \delta_A$}] (339) at (18.8, 3.175) {};
	\end{pgfonlayer}
	\begin{pgfonlayer}{edgelayer}
		\draw [style=wire] (141.center) to (139.center);
		\draw [style=wire] (146.center) to (144.center);
		\draw [style=wire] (247.center) to (246.center);
		\draw [style=wire] (250.center) to (249.center);
		\draw [style=wire] (252.center) to (251.center);
		\draw [style=wire, bend right=90, looseness=1.25] (141.center) to (250.center);
		\draw [style=wire, bend right=90, looseness=1.25] (146.center) to (252.center);
		\draw [style=wire] (299.center) to (298.center);
		\draw [style=wire] (301.center) to (300.center);
		\draw [style=wire] (303.center) to (302.center);
		\draw [style=wire] (305.center) to (304.center);
		\draw [style=wire] (307.center) to (306.center);
		\draw [style=wire, bend left=90, looseness=1.25] (299.center) to (305.center);
		\draw [style=wire, bend left=90, looseness=1.25] (301.center) to (307.center);
		\draw [style=wire] (311.center) to (310.center);
		\draw [style=wire] (317.center) to (316.center);
		\draw [style=wire] (319.center) to (318.center);
		\draw [style=wire] (321.center) to (320.center);
		\draw [style=wire] (323.center) to (322.center);
		\draw [style=wire, bend right=90, looseness=1.25] (319.center) to (323.center);
		\draw [style=wire, in=-90, out=90] (309.center) to (308.center);
		\draw [style=wire] (311.center) to (310.center);
		\draw [style=wire, in=-90, out=90, looseness=0.75] (313.center) to (312.center);
		\draw [style=wire, bend left=90, looseness=1.25] (309.center) to (313.center);
		\draw [style=wire] (325.center) to (324.center);
		\draw [style=wire] (327.center) to (326.center);
		\draw [style=wire] (329.center) to (328.center);
		\draw [style=wire, bend left=90, looseness=1.25] (325.center) to (329.center);
		\draw [style=wire] (331.center) to (330.center);
		\draw [style=wire] (333.center) to (332.center);
		\draw [style=wire] (335.center) to (334.center);
		\draw [style=wire, bend right=90, looseness=1.25] (331.center) to (335.center);
	\end{pgfonlayer}
\end{tikzpicture}
\end{equation}

\begin{equation}\label{fins:frob}
\begin{tikzpicture}[xscale = .55, yscale = .45, baseline=(current  bounding  box.center)]
\begin{pgfonlayer}{nodelayer}
		\node [style=none] (42) at (7.725, -0.75) {$=$};
		\node [style=none] (108) at (4.275, 0.4) {};
		\node [style=none] (110) at (6.275, 0.4) {};
		\node [style=none] (111) at (5.275, -1.825) {};
		\node [style=none] (112) at (4.275, -0.35) {};
		\node [style=none] (113) at (6.275, -0.325) {};
		\node [style=none] (114) at (5.275, -1.075) {};
		\node [style=none] (117) at (9.225, 0.65) {};
		\node [style=none] (118) at (12.125, 0.65) {};
		\node [style=none] (119) at (10.05, -1.825) {};
		\node [style=none] (120) at (9.225, -0.625) {};
		\node [style=none] (121) at (10.85, -0.6) {};
		\node [style=none] (122) at (10.05, -1.225) {};
		\node [style=none] (124) at (10.825, 0.65) {};
		\node [style=none] (125) at (13.825, 0.65) {};
		\node [style=none] (126) at (12.95, -1.825) {};
		\node [style=none] (127) at (12.125, -0.625) {};
		\node [style=none] (128) at (13.825, -0.6) {};
		\node [style=none] (129) at (12.95, -1.25) {};
		\node [style=none] (131) at (19.65, -0.75) {$=$};
		\node [style=none] (134) at (18.2, -1.525) {};
		\node [style=none] (137) at (18.2, -0.05) {$\bullet$};
		\node [style=none] (138) at (20.25, -3) {$\epsilon_{A \otimes B} = \epsilon_{A} \otimes \epsilon_{B}$};
		\node [style=none] (145) at (21.025, -1.525) {};
		\node [style=none] (146) at (21.025, -0.05) {$\bullet$};
		\node [style=none] (148) at (22.275, -1.525) {};
		\node [style=none] (149) at (22.275, -0.05) {$\bullet$};
		\node [style=none] (151) at (9.25, -3) {$\delta_{A \otimes B} = (1_A \otimes \sigma_{BA} \otimes 1_B) \circ (\delta_{A} \otimes \delta_B)$};
	\end{pgfonlayer}
	\begin{pgfonlayer}{edgelayer}
		\draw [style=wire, bend right=90, looseness=1.25] (112.center) to (113.center);
		\draw [style=wire] (108.center) to (112.center);
		\draw [style=wire] (110.center) to (113.center);
		\draw [style=wire] (111.center) to (114.center);
		\draw [style=wire, bend right=90, looseness=1.25] (120.center) to (121.center);
		\draw [style=wire] (117.center) to (120.center);
		\draw [style=wire, in=90, out=-90, looseness=1.25] (118.center) to (121.center);
		\draw [style=wire] (119.center) to (122.center);
		\draw [style=wire, bend right=90, looseness=1.25] (127.center) to (128.center);
		\draw [style=wire, in=90, out=-90, looseness=1.25] (124.center) to (127.center);
		\draw [style=wire] (125.center) to (128.center);
		\draw [style=wire] (126.center) to (129.center);
		\draw [style=wire] (134.center) to (137.center);
		\draw [style=wire] (145.center) to (146.center);
		\draw [style=wire] (148.center) to (149.center);
	\end{pgfonlayer}
\end{tikzpicture}
\end{equation}
\begin{equation}\label{discar:nat}
\begin{tikzpicture}[xscale = .55, yscale = .45, baseline=(current  bounding  box.center)]
	\begin{pgfonlayer}{nodelayer}
		\node [style=none] (200) at (19.4, 1.75) {$=$};
		\node [style=none] (202) at (17.45, 1.525) {};
		\node [style=none] (205) at (17.45, 0.6) {};
		\node [style=small box] (207) at (17.45, 1.875) {$f$};
		\node [style=none] (208) at (17.45, 3.225) {$\bullet$};
		\node [style=none] (209) at (17.45, 2.3) {};
		\node [style=none] (210) at (21.15, 2.725) {$\bullet$};
		\node [style=none] (211) at (21.15, 1.1) {};
		\node [style=none] (212) at (19.4, -0.75) {$\epsilon_B \circ f = \epsilon_A$};
	\end{pgfonlayer}
	\begin{pgfonlayer}{edgelayer}
		\draw [style=wire] (202.center) to (205.center);
		\draw [style=wire] (208.center) to (209.center);
		\draw [style=wire] (210.center) to (211.center);
	\end{pgfonlayer}
\end{tikzpicture}
\end{equation}
\end{defn}

\begin{nota}
Recall that our convention is to draw a string diagram in the lower-left to upper-right direction. So, above, the duplicate $\delta_A : A \fun A \otimes A$ is depicted as an upward fork
$\begin{tikzpicture}[xscale = .3, yscale = .3, baseline={([yshift=5pt]current bounding box.south)}]
   \begin{pgfonlayer}{nodelayer}
		\node [style=none] (324) at (8.4, 6.1) {};
		\node [style=none] (325) at (8.4, 5.575) {};
		\node [style=none] (326) at (7.7, 5.05) {};
		\node [style=none] (327) at (7.7, 4.425) {};
		\node [style=none] (328) at (6.975, 6.1) {};
		\node [style=none] (329) at (6.975, 5.575) {};
	\end{pgfonlayer}
	\begin{pgfonlayer}{edgelayer}
		\draw [style=wire] (325.center) to (324.center);
		\draw [style=wire] (327.center) to (326.center);
		\draw [style=wire] (329.center) to (328.center);
		\draw [style=wire, bend left=90, looseness=1.25] (325.center) to (329.center);
	\end{pgfonlayer}
\end{tikzpicture}$
and the discard $\epsilon_A : A \fun \moi$ an upward dead-end
$\begin{tikzpicture}[xscale = .3, yscale = .3, baseline={([yshift=5pt]current bounding box.south)}]
   \begin{pgfonlayer}{nodelayer}
		\node [style=none] (326) at (7.7, 5.8) {$\bullet$};
		\node [style=none] (327) at (7.7, 4.425) {};
	\end{pgfonlayer}
	\begin{pgfonlayer}{edgelayer}
		\draw [style=wire] (327.center) to (326.center);
	\end{pgfonlayer}
\end{tikzpicture}$.
\end{nota}

Thus a Markov category is endowed with both a symmetric monoidal structure and additional duplicate morphisms and discard morphisms that satisfy (\ref{mon:law})-(\ref{discar:nat}). For our purpose, duplicate morphisms are needed mainly to express the same input entering several causal processes, and discard morphisms are needed to express ignoring or marginalizing over some outcomes of a causal process. The equations in (\ref{mon:law}) are axioms for the so-called \memph{cocommutative monoidal comonoid}, and the equations in (\ref{fins:frob}) express the condition that duplicates and discards respect the monoidal product. All these axioms are fairly intuitive.

The equation in (\ref{discar:nat}) says roughly that discarding the output of a morphism is identical to discarding the input in the first place. This condition is not explicitly imposed in \cite{Fong:thesis} but is actually needed for a main result therein (more on this later). Actually (\ref{discar:nat})  holds automatically if the monoidal unit $\moi$ is terminal; conversely, (\ref{discar:nat}), together with the other conditions stipulated above, implies that  $\moi$ is terminal, and hence $\epsilon_\moi  = 1_\moi$. Also note that $\delta_\moi = 1_\moi$  in any strict Markov category. See \cite[Remark~2.3]{fritz2020synthetic}.

For categories with additional structures, the notion of a functor can be strengthened accordingly. For example, a \memph{(strong) Markov functor} between two Markov categories is a functor that preserves the relevant structures (see \cite[Definition~10.14]{fritz2020synthetic} for details.)

\section{Causal theories and effect morphisms}
\label{sec:cau}

In this section we describe the central object of study in this paper, a syntax category for DAG-based causal models defined in \cite{Fong:thesis}, called a causal theory. The framework is an abstraction of causal Bayesian networks, so we first review the latter in \ref{subsec:cbn}, and then introduce causal theories as free Markov categories in \ref{subsec:cau}. In \ref{subsec:effect} we highlight a class of morphisms constructed in \cite{Fong:thesis}, which we refer to as causal effect morphisms. Our main results are concerned with these morphisms.

\subsection{Causal Bayesian networks} \label{subsec:cbn}
 A \memph{directed graph} is a quadruple  $G = (V, A, s, t)$, where $V$, $A$ are sets and $s : A \fun V$, $t: A \fun V$ are functions. Elements in $V$ are called \memph{vertices} of $G$ and those in $A$ are \memph{directed edges} or \memph{arrows} of $G$. For $a\in A$, $s(a)$ is the \memph{source} of $a$ and $t(a)$ the \memph{target} of $a$. $G$ is \memph{finite} if both $V$ and $A$ are. It is \memph{simple} if, for all $a, a'\in A$, we have $a=a'$ whenever $s(a)=s(a')$ and $t(a)=t(a')$. We consider only finite simple graphs in this paper. A sequence of distinct arrows $a_1, \ldots, a_n \in A(G)$ is a \memph{directed path}, starting from $s(a_1)$ and ending at $t(a_n)$, if $s(a_{i+1}) = t(a_i)$ for $1 \leq i < n$. It is a \memph{cycle} if, in addition, $s(a_1) = t(a_n)$. $G$ is \memph{acyclic} if it contains no cycle. For $x, y\in V(G)$, $x$ is called a \memph{parent} of $y$ and $y$ a \memph{child} of $x$ if for some $a\in A(G)$, $s(a)=x$ and $t(a)=y$.

 A \memph{Bayesian network} (BN) over a set of (categorical) random variables $\mathbf{V}$ consists of a triple $(G, P, \upsilon)$, where $G$ is a directed acyclic graph (DAG), $P$ is a joint probability law of $\mathbf{V}$, and $\upsilon: V(G)\rightarrow \mathbf{V}$ is a bijection between the vertices of $G$ and the random variables. Following common practice, we will leave the bijection $\upsilon$ implicit and simply identify $V(G)$ with $\mathbf{V}$, and call $G$ a DAG over $\mathbf{V}$. The defining condition of a BN is that $G$ and $P$ satisfy a Markov condition, which requires that $P$ can be factorized according to $G$ as follows: \begin{equation}
 \label{fac:pre}
 P(\mathbf{V}) = \prod_{X\in \mathbf{V}}P(X | \pa_{G}(X)),
\end{equation} where $\pa_G(X)$ denotes the set of parents of $X$ in $G$. When $G$ is sufficiently sparse, the factorization enables efficient computations of various probabilities entailed by the joint probability law, which makes the BN useful for probabilistic reasoning \cite{Pearl1988}.

 The DAG in a BN usually lends itself to a causal interpretation, as a representation of the qualitative causal structure. With this causal reading, the BN framework can be extended to handle reasoning about effects of interventions \cite{SGS2000, pearl2009causality}. Specifically, a \memph{causal Bayesian network} (CBN) over $\mathbf{V}$ does not represent just one joint probability law, but a number of \memph{interventional} probability distributions. The standard setup is that for every subset $\mathbf{T}\subseteq \mathbf{V}$ and every possible value configuration $\mathbf{t}$ for $\mathbf{T}$, there is a probability distribution resulting from an (exogenous) intervention that forces $\mathbf{T}$ to take value $\mathbf{t}$. Such an interventional distribution, denoted as $P(\mathbf{V} | do(\mathbf{T}=\mathbf{t}))$ using Pearl's (\cite{pearl2009causality}) do-operator, is assumed to be equal to a truncated factorization:
\begin{equation}
\label{fac:post}
P(\mathbf{V} | do(\mathbf{T}=\mathbf{t})) = \left\{
\begin{array}{ll}
\prod_{X\in \mathbf{V}\backslash \mathbf{T}}P(X | \pa_G(X)) & \textrm{for values of } \mathbf{V} \textrm{ consistent with } \mathbf{T} = \mathbf{t}, \\
0 & \textrm{for values of } \mathbf{V} \textrm{ inconsistent with } \mathbf{T} = \mathbf{t}.
\end{array}\right.
\end{equation}
As a special case, when $\mathbf{T}=\emptyset$, we obtain the factorization (\ref{fac:pre}) of the \memph{pre-intervention} distribution. Equation (\ref{fac:post}) can be viewed as the defining axiom for the CBN, sometimes referred to as the \memph{intervention principle} \cite{zhang2011intervention}.

Note two key ideas in this formulation of a CBN: (1) for each $X\in \mathbf{V}$, $P(X|\pa_G(X))$ encodes a \memph{modular} causal process or mechanism (when $\pa_G(X)=\emptyset$, $P(X)$ is taken to encode an \memph{exogenous} mechanism for $X$), and the whole causal system is composed of these causal modules; (2) an intervention breaks the modules for its target variables but does not affect the other modules (hence the truncated factorization.) Put this way, $P(X|\pa_G(X))$ is a particular, probabilistic model of the causal module; the causal theory, as a syntax category, will express the causal module more abstractly, to which we now turn.

\subsection{Causal theories as free Markov categories}
\label{subsec:cau}
We now follow \cite{Fong:thesis} to define the causal theory induced by a DAG $G = (V, A, s, t)$, a category denoted as $\cat{Cau}(G)$. The objects are given by words over $V$. A \memph{word} over $V$ is a finite sequence of elements of $V$, and this also includes the empty word $\0$. Let $W(V)$ be the set of words over $V$. Obviously $W(V)$ is closed under \memph{concatenation}: if $v, w \in W(V)$ then $vw \in W(V)$.  So concatenation provides a monoidal product on $W(V)$, with the empty word $\0$ as the unit.

\begin{ter}\label{graph:bas}
For convenience, elements of $W(V)$ are also referred to as \memph{variables} and those of length $1$, that is, the vertices in $V$, are called \memph{atomic variables}. To ease the notation, we will henceforth denote all variables with lower case letters. Concatenation of two variables $v$, $w$ is also written as $v \otimes w$.

An atomic variable $v$ is a \memph{path ancestor} of an atomic variable $w$ if there is a directed path in $G$ from $v$ to $w$, and is an \memph{ancestor} of $w$ if it is a path ancestor of $w$ or is identical with $w$.

If no atomic variable occurs more than once in a variable $v$ then $v$ is \memph{singular}; in particular, $\0$ is singular. A singular variable is \memph{maximal} if  each atomic variable occurs exactly once in it.

Let $v = \bigotimes_{1 \leq i \leq n} v_i$, where each $v_i$ is atomic. Let $v_S = \bigotimes_{i \in S} v_i$ for $S \sub \{1, \ldots, n\}$; set $v_\0 = \0$. Write $w \sub v$, or $w \in v$ if $w$ is  atomic, and say that $w$ is a \memph{sub-variable} of $v$ if  $w$ is equal to some $v_S$. Let $w = v_S$ and $w' = v_{S'}$. Write $v/w = v_{\{1, \ldots, n\} \mi S}$, $w \cap w' = v_{S \cap S'}$, $w \mi w' = v_{S \mi S'}$, and so on. We say that $w$, $w'$ are \memph{disjoint} if no atomic variable occurs in both of them.  Note that being disjoint is not the same as $w \cap w' = \0$,  unless $v$ is singular.
\end{ter}

The morphisms in $\cat{Cau}(G)$ are generated from two distinct classes of generators (basic morphisms), in addition to the identity morphisms:
\begin{itemize}
  \item The first class consists of duplicate and discard morphisms for each atomic variable $v$
  \[
  \0 \toleft^{\epsilon_v} v \to^{\delta_v} vv.
   \]
   As mentioned previously, duplicate morphisms are needed to express the same variable entering multiple causal processes, and discard morphisms are needed to express ignoring or marginalizing over some outcomes of a causal process.
 \item The second class is the heart of the matter and consists of a \memph{causal mechanism} for each atomic variable $v$
      \[
      \pa(v) \to^{\kappa_v} v,
       \]
       where $\pa(v)$ is a chosen singular variable that contains all the parents of $v$,  and is more accurately denoted by $\pa_{G}(v)$ if necessary.  If $\pa(v) = \0$ then this is just $\0 \fun v$, which shall be called a \memph{exogenous} causal mechanism.
\end{itemize}

The causal theory $\cat{Cau}(G)$ is the \memph{free} Markov category generated from these two classes of morphisms (and the identity morphisms), by taking all compositions and products as depicted in (\ref{free:graph:def}) and (\ref{mon:free:graph:def}), subject only to the constraints in axioms (\ref{mon:law})-(\ref{discar:nat}).

For convenience, set $\kappa_\0 = 1_\0$ in $\cau(G)$.

A free category is a category generated from certain generators by well-defined operations in a ``no junk no noise'' manner: ``no junk'' in the sense that only those morphisms that can be so generated are in the category,  and ``no noise'' in the sense that no relations between morphisms hold unless they are required by the axioms.  For precise technical definitions and graphical constructions of free monoidal categories, see \cite{selinger2010survey}.  A graphical construction of free Markov categories takes a little more work,  which can be found in \cite{YinZhangModularity}.  For the present purposes,  we need not enter the rather technical details of the constructions, and we will simply use some lemmas from \cite{YinZhangModularity} in some of our proofs.

\begin{rem}\label{cau:mech:ind}
It may seem that the construction of $\cat{Cau}(G)$ depends on the choice of the singular variables $\pa(v)$ for the causal mechanisms $\kappa_v$. But this is not so: two distinct choices of $\pa(v)$ (and hence of $\kappa_v$) only differ by a permutation of atomic variables in $\pa(v)$ and the resulting  free  Markov categories are isomorphic.
\end{rem}

Following \cite{Fong:thesis} (see also \cite{jacobs2019causal}), we take $\cau(G)$ as an categorical embodiment of the syntax for causal reasoning with $G$. It can then be interpreted in any Markov category via strong Markov functors, yielding different kinds of causal models. For example, a CBN based on $G$ is a model of $\cat{Cau}(G)$ in the Markov category $\cat{FinStoch}$, the category containing stochastic matrices as morphisms \cite{ Fong:thesis, fritz2020synthetic, jacobs2019causal}, whereas a deterministic structural equation model based on $G$ is a model of $\cat{Cau}(G)$ in the Markov category $\cat{Set}$. We may also explore less studied causal models, such as possibilistic ones, which are models of $\cat{Cau}(G)$ in the Markov category $\cat{Rel}$, the morphisms in which are relations between sets \cite{Fong:thesis}.

\subsection{Causal effect morphisms}
\label{subsec:effect}

Recall the interventional probability distributions $P(\mathbf{V}|do(\mathbf{T}))$ in the context of CBN, which is often referred to as the causal effect of $\mathbf{T}$ on $\mathbf{V}$ \cite{tian2002general}. We now construct a class of morphisms in a causal theory that is a syntactic counterpart to such causal effects.

\begin{nota}\label{uni:pro}
In any strict Markov category $\cat M$ such as $\cat{Cau}(G)$, a morphism $A \fun B$ is called a \memph{multiplier} on  the monoidal product $A = \bigotimes_{i} A_i$ if it is generated from the duplicates, discards, symmetries, and identities on the factors $A_i$; so $B$ must be a monoidal product of copies of the factors $A_i$. Moreover, if $(\cat M_0, \otimes, \moi)$ is a free monoid and $A_1, \ldots, A_n$ are pairwise distinct objects in the alphabet  then  the multiplier is unique, which is denoted by $\iota_{A \rightarrow B}$. This is due to (\ref{mon:law}) or, more intuitively, coherence of  the graphical language for Markov categories (see \cite{YinZhangModularity}). For instance, if  $A = A_1 \otimes A_2 \otimes A_3$ and $B = A_1^2 \otimes A_2  \otimes A_1 \otimes A_2$ then $\iota_{A \rightarrow B}$ may be depicted as
$\begin{tikzpicture}[xscale = .3, yscale = .3, baseline={([yshift=8pt]current bounding box.south)}]
\begin{pgfonlayer}{nodelayer}
		\node [style=none] (108) at (4.275, 0.15) {};
		\node [style=none] (110) at (7.325, 0.15) {};
		\node [style=none] (111) at (5.275, -1.575) {};
		\node [style=none] (112) at (4.275, -0.1) {};
		\node [style=none] (113) at (6.275, -0.65) {};
		\node [style=none] (114) at (5.275, -0.95) {};
		\node [style=none] (117) at (6.175, 0.15) {};
		\node [style=none] (118) at (8.8, 0.15) {};
		\node [style=none] (119) at (7.975, -1.575) {};
		\node [style=none] (120) at (7.125, -0.6) {};
		\node [style=none] (121) at (8.775, -0.35) {};
		\node [style=none] (122) at (7.975, -1.075) {};
		\node [style=none] (124) at (10.2, -1.55) {};
		\node [style=none] (150) at (10.2, 0.025) {$\bullet$};
		\node [style=none] (151) at (5.275, 0.15) {};
		\node [style=none] (152) at (4.4, -1.5) {$A_1$};
		\node [style=none] (153) at (7, -1.55) {$A_2$};
		\node [style=none] (154) at (9.2, -1.5) {$A_3$};
	\end{pgfonlayer}
	\begin{pgfonlayer}{edgelayer}
		\draw [style=wire, in=-135, out=-90] (112.center) to (113.center);
		\draw [style=wire] (108.center) to (112.center);
		\draw [style=wire, in=45, out=-90, looseness=1.25] (110.center) to (113.center);
		\draw [style=wire, in=270, out=90] (111.center) to (114.center);
		\draw [style=wire, in=-90, out=-60, looseness=1.25] (120.center) to (121.center);
		\draw [style=wire, in=120, out=-90] (117.center) to (120.center);
		\draw [style=wire, in=90, out=-90, looseness=1.25] (118.center) to (121.center);
		\draw [style=wire] (119.center) to (122.center);
		\draw [style=wire] (124.center) to (150.center);
		\draw [style=wire] (114.center) to (151.center);
	\end{pgfonlayer}
\end{tikzpicture}$, where how the  duplicates in the trident are arranged, how the edges at the nodes are ordered, how the copies of the same object in the codomain are ordered, and so on, can all be left unspecified.
\end{nota}

Henceforth we work in $\cau(G)$.

\begin{ter}\label{what:dia}
By the construction in \cite{YinZhangModularity},  a morphism in  $\cau(G)$ is an equivalence class of string diagrams up to surgeries. Therefore, by a string diagram of a morphism, we mean any diagram in the equivalence class in question.
\end{ter}

\begin{nota}
Although, for our purpose, there is no need to distinguish between $wv$ and $vw$ in $\cat{Cau}(G)$, for a technical reason (symmetries in free Markov categories cannot be identities), we cannot work with the quotient of $W(V)$ with respect to the relation $wv=vw$ on words. This is also the reason why the maneuver in Remark~\ref{cau:mech:ind} is needed.

To remedy this, we first choose a total ordering on $V$ and denote the corresponding maximal singular variable by $\dot V = 1 \otimes \ldots \otimes n$. All singular variables we shall speak of are sub-variables of $\dot V$. For instance, if $v$, $w$ are  singular variables then  $v \cup w$  denotes the unique sub-variable of $\dot V$ that contains exactly the atomic variables in $v$, $w$.

The results below depend on the chosen ordering only because taking monoidal products of atomic variables does.
\end{nota}

\begin{defn}\label{gen:condi}
For singular variables $t$ and $v$, let $G_{t \rightarrow v}$ be the subgraph of $G$ that consists of all the vertices in $t \cup v$ and all the directed paths that end in $v$ but do not \memph{travel toward} $t$, that is, do not pass through or end in $t$ (starting in $t$ is allowed). Note that, for every  $i \in V(G_{t \rightarrow v})$, if $i \notin t$ then its parents in $G$ are all in $G_{t \rightarrow v}$ as well and if $i \in t$ then it has no parents in $G_{t \rightarrow v}$.

Construct a string diagram  as follows. For each $i \in V(G_{t \rightarrow v})$, let $\bar i$ be the monoidal product of as many copies of $i$ as the number of children of $i$ in $G_{t \rightarrow v}$. Let $\stri_i$ be a  string diagram of
\[
\begin{dcases*}
   \iota_{i \rightarrow \bar i \otimes i}                & if $i \in t \cap v$,\\
   \iota_{i \rightarrow \bar i}                            & if $i \in t \mi v$,\\
  \iota_{i \rightarrow \bar i \otimes i} \circ \kappa_i  & if $i \in v \mi t$,\\
  \iota_{i \rightarrow \bar i} \circ \kappa_i              & if $i \notin t \cup v$;
\end{dcases*}
\]
note  the extra copy of $i$ in the codomain of $\iota_{i \rightarrow \bar i \otimes i}$. According to Notation~\ref{uni:pro}, there is no need to choose orderings for the codomains of the multipliers employed here. For $j \in V(G_{t \rightarrow v})$, let $o_j$ be the number of output wires of $\stri_j$ and $p_j$ that of all the input wires labelled by $j$ of all the other  $\stri_i$. Observe that  $o_j = p_j + 1$ if $j \in v$ and  $o_j = p_j$ in all other cases. So we may connect the corresponding wires and fuse these \memph{components} $\stri_i$ into a single  string diagram, denoted by $\stri_{[v \| t]}$, whose input wires are labelled by $t$ and the output wires by $v$.  The string diagram thus obtained may not be unique up to isomorphisms, but the morphism it represents is, due to  coherence of  the graphical language for Markov categories. This morphism is referred to as the \memph{causal effect} of $t$ on $v$ and is denoted by $[v \| t] : t \fun v$, or simply $[v]$ when $t = \0$, which is also called the \memph{exogenous effect} on $v$.
\end{defn}

This class of morphisms was called ``causal conditionals'' in \cite[\S~4]{Fong:thesis}. We prefer to call them ``causal effects'' here because of their eponymous counterparts in probabilistic causal modeling mentioned earlier, but also because we will study a notion of a conditional in the next section, and $[v \| t]$ may not be a conditional in that sense.

\begin{exam}\label{exa:cau:con}
For any atomic variable $v$, if $\pa(v) = \0$ then $\kappa_v$ is simply depicted as $\begin{tikzpicture}[xscale = .3, yscale = .3, baseline={([yshift=6pt]current bounding box.south)}]
\begin{pgfonlayer}{nodelayer}
		\node [style=none] (124) at (9.825, -1.55) {$\bullet$};
		\node [style=none] (150) at (9.825, 0.025) {};
	\end{pgfonlayer}
	\begin{pgfonlayer}{edgelayer}
		\draw [style=wire] (124.center) to (150.center);
	\end{pgfonlayer}
\end{tikzpicture}$. The simplest causal effects are the ones of the form $[v \| \pa(v)]$, which is of course just the causal mechanism $\kappa_v$. Below are some simple examples of the causal effect $[z \| x]$ in $\cau(G)$ for four different graphs with three vertices:
\begin{center}
\begin{tabular}{ c|c|c|c|c }
The graph $G$                       & $\bfig \Atriangle(0,0)/<-`->`/<150, 200>[y`x`z; ``] \efig$ &
                                      $\bfig \Atriangle(0,0)/->`->`/<150, 200>[y`x`z; ``] \efig$ &
                                      $\bfig \Atriangle(0,0)/<-`<-`/<150, 200>[y`x`z; ``] \efig$ &
                                      $\bfig  \Atriangle(0,0)/->`->`->/<150, 200>[y`x`z; ``]  \efig$ \\
\hline
The subgraph $G_{x \rightarrow z}$  & $\bfig \Atriangle(0,0)/<-`->`/<150, 200>[y`x`z; ``] \efig$ &
                                      $\bfig \Atriangle(0,0)/`->`/<150, 200>[y`x`z; ``] \efig$ &
                                      $\bfig \Atriangle(0,0)/``/<150, 200>[`x`z; ``] \efig$ &
                                      $\bfig \Atriangle(0,0)/`->`->/<150, 200>[y`x`z; ``] \efig$ \\
\hline
The causal effect $[z \| x]$  & $\begin{tikzpicture}[xscale = .7, yscale = .4, baseline=(current  bounding  box.center)]
\begin{pgfonlayer}{nodelayer}
		\node [style=small box] (166) at (24.475, 1.7) {$\kappa_y$};
		\node [style=none] (167) at (24.475, 3.7) {};
		\node [style=none] (168) at (24.475, 2.125) {};
		\node [style=small box] (203) at (24.475, 4.05) {$\kappa_z$};
		\node [style=none] (204) at (24.475, 5.7) {};
		\node [style=none] (205) at (24.475, 4.475) {};
		\node [style=none] (206) at (24.475, 1.35) {};
		\node [style=none] (207) at (24.475, 0.125) {};
		\node [style=none] (209) at (24.075, 0.45) {$x$};
		\node [style=none] (210) at (24.075, 2.9) {$y$};
		\node [style=none] (211) at (24.075, 5.1) {$z$};
	\end{pgfonlayer}
	\begin{pgfonlayer}{edgelayer}
		\draw [style=wire] (167.center) to (168.center);
		\draw [style=wire] (204.center) to (205.center);
		\draw [style=wire] (206.center) to (207.center);
	\end{pgfonlayer}
\end{tikzpicture}$ & $\begin{tikzpicture}[xscale = .7, yscale = .5, baseline=(current  bounding  box.center)]
\begin{pgfonlayer}{nodelayer}
		\node [style=none] (167) at (24.475, 3.7) {};
		\node [style=none] (168) at (24.475, 2.45) {$\bullet$};
		\node [style=small box] (203) at (24.475, 4.05) {$\kappa_z$};
		\node [style=none] (204) at (24.475, 5.625) {};
		\node [style=none] (205) at (24.475, 4.475) {};
		\node [style=none] (206) at (23.225, 4.6) {$\bullet$};
		\node [style=none] (207) at (23.225, 3.375) {};
		\node [style=none] (209) at (22.825, 3.7) {$x$};
		\node [style=none] (210) at (24.075, 2.95) {$y$};
		\node [style=none] (211) at (24.075, 5.1) {$z$};
	\end{pgfonlayer}
	\begin{pgfonlayer}{edgelayer}
		\draw [style=wire] (167.center) to (168.center);
		\draw [style=wire] (204.center) to (205.center);
		\draw [style=wire] (206.center) to (207.center);
	\end{pgfonlayer}
\end{tikzpicture}$ & $\begin{tikzpicture}[xscale = .7, yscale = .5, baseline=(current  bounding  box.center)]
\begin{pgfonlayer}{nodelayer}
		\node [style=none] (204) at (24.525, 4.65) {};
		\node [style=none] (205) at (24.525, 3.4) {$\bullet$};
		\node [style=none] (206) at (23.225, 4.6) {$\bullet$};
		\node [style=none] (207) at (23.225, 3.375) {};
		\node [style=none] (209) at (22.825, 3.7) {$x$};
		\node [style=none] (211) at (24.2, 4.175) {$z$};
	\end{pgfonlayer}
	\begin{pgfonlayer}{edgelayer}
		\draw [style=wire] (204.center) to (205.center);
		\draw [style=wire] (206.center) to (207.center);
	\end{pgfonlayer}
\end{tikzpicture}$ & $\begin{tikzpicture}[xscale = .7, yscale = .5, baseline=(current  bounding  box.center)]
\begin{pgfonlayer}{nodelayer}
		\node [style=none] (17) at (5.7, 0.35) {};
		\node [style=none] (18) at (6.3, 0.35) {$\bullet$};
		\node [style=none] (19) at (6.3, 1.575) {};
		\node [style=none] (20) at (5.7, 1.575) {};
		\node [style=small box] (22) at (6, 2.075) {$\kappa_z$};
		\node [style=none] (24) at (6, 2.575) {};
		\node [style=none] (25) at (6, 3.775) {};
		\node [style=none] (37) at (5.65, 3.425) {$z$};
		\node [style=none] (38) at (5.3, 0.425) {$x$};
		\node [style=none] (39) at (6.7, 0.825) {$y$};
	\end{pgfonlayer}
	\begin{pgfonlayer}{edgelayer}
		\draw [style=wire] (17.center) to (20.center);
		\draw [style=wire] (18.center) to (19.center);
		\draw [style=wire] (24.center) to (25.center);
	\end{pgfonlayer}
\end{tikzpicture}$
\end{tabular}
\end{center}
Note that in the second and third examples, $x$ is not an ancestor of $z$ in the causal graph $G$, and the causal effect $[z\|x]$ is accordingly ``disconnected'': the morphism factors through the monoidal unit $\0$, which marks the lack of influence of $x$ on $z$.
\end{exam}

\section{The existence of conditionals and a generic do-calculus}
\label{sec:effect}

With string diagrams, we can use topological notions to aid reasoning. Here are some notions that will be used in some of the arguments below:
\begin{defn}\label{defn:inferen}
Let $\Gamma$ be a string diagram in a symmetric monoidal category $\cat C$. Denote the source of an edge $e$ in $\Gamma$ by $e(0)$ and the target by $e(1)$. A \memph{directed path} $a \rightsquigarrow b$ in $\Gamma$ is a sequence of edges in $\Gamma$
\[
p = (a = e_0,  e_1, \ldots, e_k,  b = e_{k+1})
\]
such that, for each $i$, $e_i(1) = e_{i+1}(0)$ and $e_i$ is not labeled by $\moi$; we also regard the source and target of each $e_i$ as belonging to the directed path, and write $p(0) = a(0)$ and $p(1) = b(1)$. A \memph{path} is just a concatenation of finitely many directed paths. In particular, a \memph{splitter path} (respectively, a \memph{collider path})  is a  concatenation of two directed paths joined at the starting nodes (respectively, at the ending nodes).

Two (not necessarily distinct) edges are \memph{connected} in $\Gamma$ if there is a path between them. More generally, two sets $A$, $B$ of edges are connected in $\Gamma$ if some $a \in A$ is connected with some $b \in B$, of particular interest is the case $A = \dom (\Gamma)$ and $B = \cod (\Gamma)$.
\end{defn}

In some proofs below, we shall need the theory on surgeries on string diagrams developed in \cite{YinZhangModularity}. For Markov categories, there are four types of surgeries, corresponding to the four diagrams in (\ref{mon:law}) and (\ref{discar:nat}), which shall be accordingly referred to as coassociativity surgery,  counitality surgery,  cocommutativity surgery,  and  discard surgery,  respectively.   Only the following portion from that theory is needed here.

Recall Remark~\ref{intro:string} and Terminology~\ref{what:dia}.  Let $\Gamma$ be a string diagram of a morphism in $\cau(G)$.  A node $x$ of  $\Gamma$ is \memph{quasi-terminal} if either it is a dead-end (discard) or every maximal directed path $p$ in $\Gamma$ with $p(0) = x$  runs into a dead-end or, in case that $x$ is a duplicate, this is so for all such paths through one of the prongs.  Denote the set of quasi-terminal nodes of $\Gamma$ by $\Delta_\Gamma$ and its complement by  $\tilde \Delta_\Gamma$.  Denote by $P_\Gamma$ the set of the directed paths that end in  $\cod(\Gamma)$ and by $S_\Gamma$ the set of splitter paths between edges in $\cod(\Gamma)$.

\begin{lem}\label{decor:mat}
Suppose that  $\Gamma$, $\Upsilon$ are string diagrams of the same morphism in $\cau(G)$. Then
\begin{itemize}
  \item There is a bijection $\pi : \tilde \Delta_\Gamma \fun \tilde \Delta_{\Upsilon}$ compatible with the labels in $\Gamma$, $\Upsilon$.
  \item There is a bijection $\dot \pi :  P_\Gamma \fun P_{\Upsilon}$  compatible with $\pi$, that is, $\pi$ restricts to a bijection between the nodes in $\tilde \Delta_\Gamma$ belonging to $p \in P_\Gamma$ and those in $\tilde \Delta_\Upsilon$ belonging to $\dot \pi(p)$.
  \item There is a bijection $\ddot \pi : S_\Gamma \fun S_{\Upsilon}$ compatible with $\pi$.
\end{itemize}
\end{lem}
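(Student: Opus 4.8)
The plan is to use the characterization, recalled from \cite{YinZhangModularity}, that two string diagrams represent the same morphism of $\cau(G)$ precisely when they are joined by a finite zigzag of surgeries of the four types (coassociativity, counitality, cocommutativity, discard) and their inverses. Since bijections compose and invert, it suffices to produce $\pi$, $\dot\pi$, $\ddot\pi$ in the case where $\Upsilon$ arises from $\Gamma$ by a \emph{single} surgery move; the general statement then follows by chaining the bijections along the connecting sequence. Each surgery is moreover \emph{local}: it rewrites $\Gamma$ only inside a bounded region $R$ and leaves the rest of the diagram intact — in particular it does not change which output wires of $R$ admit a forward path reaching $\cod(\Gamma)$. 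So in every case I would set $\pi$, $\dot\pi$, $\ddot\pi$ to be the identity outside $R$ and concentrate entirely on the nodes and path-segments inside $R$.

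First I would dispatch the counitality and discard surgeries. The crucial point is that each of these rewrites replaces a sub-diagram \emph{all of whose wires fail to reach} $\cod(\Gamma)$ by another such sub-diagram with the same inputs, so forward-reachability of $\cod(\Gamma)$ from every surviving node is preserved, and hence so is every node's quasi-terminal status. Concretely, every node created or destroyed is already quasi-terminal: a discard is a dead-end; the mechanism node $f$ in a discard surgery $\epsilon_B\circ f=\epsilon_A$ has all its forward continuations running into $\epsilon_B$; and the duplicate in a counitality surgery $(1_A\otimes\epsilon_A)\circ\delta_A=1_A$ has one prong running straight into a discard, so it falls under the duplicate clause in the definition of quasi-terminal. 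Consequently $\tilde\Delta$ is untouched, none of the affected edges lies on a path in $P_\Gamma$ or a splitter in $S_\Gamma$ (such paths never enter a branch all of whose continuations dead-end), and the three bijections may be taken to be the identity. The cocommutativity surgery $\sigma_{AA}\circ\delta_A=\delta_A$ adds or removes only a wire-crossing, not a vertex, and leaves the single duplicate in place; its quasi-terminal status is symmetric in the two prongs and so is unchanged, whence $\pi$ is the identity and $\dot\pi$, $\ddot\pi$ merely reroute each thread through the crossing.

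The main obstacle is the coassociativity surgery, which rebrackets two adjacent duplicates, $(\delta_A\otimes 1_A)\circ\delta_A=(1_A\otimes\delta_A)\circ\delta_A$, feeding three output wires $a$, $b$, $c$ into the unchanged remainder of the diagram. Here the naive positional matching of the two duplicates fails, since whether a given duplicate is quasi-terminal depends on which of $a$, $b$, $c$ are \emph{live} (admit a forward path to $\cod(\Gamma)$), and the two bracketings distribute the live outputs differently among their nodes. The key is an invariant: in either bracketing the non-quasi-terminal duplicates are exactly the genuine branch points of the reduced tree spanning the live outputs, so their number equals $\max(0,\ell-1)$, where $\ell$ is the number of live outputs among $a$, $b$, $c$ — a quantity fixed by the unchanged downstream part and independent of the bracketing. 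This furnishes a bijection $\pi$ between the at most two non-quasi-terminal duplicates on the two sides. I would then confirm, by a finite case analysis over the $2^3$ live/dead patterns of $(a,b,c)$, that this same $\pi$ extends to compatible $\dot\pi$ and $\ddot\pi$: each directed path of $P_\Gamma$ crossing $R$ is matched to the path with the identical sequence of non-quasi-terminal nodes (equivalently, reaching the same downstream destination), and each splitter of $S_\Gamma$ is matched by its branching node.

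I expect the genuine difficulty to be exactly this last joint bookkeeping — exhibiting \emph{one} bijection $\pi$ that simultaneously supports compatible $\dot\pi$ and $\ddot\pi$, rather than three bijections chosen independently. Because coassociativity is a strictly local move on three wires with an unchanged environment, however, this reduces to a bounded verification that can be carried out pattern by pattern, and the reduced-tree invariant explains in advance why the relevant cardinalities always agree.
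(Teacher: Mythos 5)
The paper does not actually prove this lemma: it is quoted from the companion manuscript \cite{YinZhangModularity}, so there is no in-paper argument to compare yours against. Judged on its own terms, your high-level plan is the right one and essentially the only one available: use the fact (Terminology~\ref{what:dia}) that two diagrams present the same morphism iff they are joined by a zigzag of the four elementary surgeries, reduce to a single move, and argue locally. Your treatment of the counitality, discard, and cocommutativity surgeries is sound: every node created or destroyed there is quasi-terminal, so $\tilde\Delta$ is untouched and the path bijections are canonical. (Minor overstatement: for counitality the live prong of the removed duplicate \emph{does} lie on paths in $P_\Gamma$, so $\dot\pi$ is a canonical rerouting rather than the identity; but since the duplicate is quasi-terminal this costs nothing.) Your counting invariant for coassociativity is also correct: with $\ell$ of the three outputs live, each bracketing has exactly $\max(0,\ell-1)$ non-quasi-terminal duplicates, so the first bullet goes through.

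The gap is precisely the ``bounded verification'' you defer at the end, and it is not a formality: under the literal reading of ``compatible'' (that $\pi$ restricts to a bijection on \emph{all} non-quasi-terminal nodes lying on a path and on its image), the all-live pattern $\ell=3$ is an obstruction, not a routine case. Write the left bracketing as $D_1$ feeding $D_2$ with outputs $a,b$ from $D_2$ and $c$ from $D_1$, and the right bracketing as $D_1'$ with output $a$ feeding $D_2'$ with outputs $b,c$. The splitter path between the codomain edges reached via $a$ and $b$ is joined at $D_2$ on the left and meets only $D_2$ among the local duplicates, whereas on the right it must be joined at $D_1'$ and meets both $D_1'$ and $D_2'$; dually, a maximal directed path into $a$ meets $\{D_1,D_2\}$ on the left but only $\{D_1'\}$ on the right. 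No label-preserving $\pi$ restricts to a bijection between a one-element set and a two-element set, so the positional or reduced-tree matching you propose cannot support $\dot\pi$ and $\ddot\pi$ simultaneously in the strong sense. A correct proof must therefore either weaken the compatibility being tracked (e.g.\ to the $\kappa$-labelled nodes on a path, which is all that the paper's applications of the lemma ever use) or compare paths only up to the comonoid rebracketing itself. Your sketch asserts that the case analysis closes without confronting this, which is exactly where the real content of the lemma lies.
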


We continue to work in $\cau(G)$. Here is a useful fact from  \cite{YinZhangModularity} that will be  needed in the subsequent arguments:

\begin{lem}\label{infe:ances}
Let $f : v \fun w$ be a morphism between singular variables. Then there is a morphism $g : v / v' \fun w$ for some sub-variable $v' \sub v$ such that $f = g \otimes \epsilon_{v'}$ and, in all string diagrams of $g$, every atomic variable in $v / v'$ is connected with an atomic variable in $w$ via a directed path.
\end{lem}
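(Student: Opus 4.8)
The plan is to read $v'$ off directly from the connectivity of a string diagram of $f$, and then use the discard and counitality axioms to strip away exactly those inputs that fail to reach the codomain, leaving a diagram in which every remaining input is connected to $w$.

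First I would fix a string diagram $\Gamma$ of $f : v \fun w$ and call an edge of $\Gamma$ \emph{live} if some directed path leads from it to an edge of $\cod(\Gamma) = w$, and \emph{dead} otherwise. Let $v'$ collect the dead input wires of $\Gamma$ and $v/v'$ the live ones. The structural fact I would establish first is forced by the shape of the generators of $\cau(G)$: if a causal-mechanism box has a live output, then a directed path pulls every one of its input edges to $w$, so all its inputs are live; and since each edge has a unique target, an edge feeding a box with dead output has no alternative route to $w$ and is itself dead. Consequently the dead edges are closed under passage to directed successors, and the only way a dead edge can originate in the live part of the diagram is as one prong of a duplicate whose other prong is live.

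With this in hand, the separation $f = g \otimes \epsilon_{v'}$ goes as follows. The dead edges, together with the nodes among them, assemble into a sub-diagram all of whose outputs are discarded, i.e.\ a morphism $h$ into the monoidal unit $\moi$. By discard naturality (\ref{discar:nat}) together with $\epsilon_\moi = 1_\moi$, any morphism into $\moi$ equals the discard on its domain, so $h = \epsilon_A$ where $A$ concatenates the input edges of the dead region. By the first paragraph these inputs are precisely the wires of $v'$ together with the dead prongs of duplicates sitting in the live part. Replacing the dead region by these discards leaves each such duplicate with one live prong and one immediately discarded prong, which counitality (\ref{mon:law}) removes, restoring the undisturbed live wire. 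What survives is a diagram $\Gamma_g$ with inputs $v/v'$ and outputs $w$, representing a morphism $g : v/v' \fun w$, placed beside the discards $\epsilon_{v'}$; hence $f = g \otimes \epsilon_{v'}$ (after the evident symmetry regrouping the inputs of $v$ into those of $v/v'$ followed by those of $v'$). Because the surgeries touched only dead material, each input of $v/v'$ retains in $\Gamma_g$ a directed path to $w$ through the untouched live part.

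To upgrade the connectivity claim from $\Gamma_g$ to \emph{all} diagrams of $g$, I would invoke Lemma~\ref{decor:mat}. Given any other diagram $\Upsilon$ of $g$, the bijection $\dot\pi : P_{\Gamma_g} \fun P_{\Upsilon}$ between directed paths ending in the codomain is compatible with $\pi$ and hence preserves the labelled domain endpoint of each such path; so if an input were disconnected from $w$ in $\Upsilon$, the path witnessing its connectivity in $\Gamma_g$ would have nowhere to go, contradicting the bijection. Thus every atomic variable of $v/v'$ is connected to $w$ in every string diagram of $g$. I expect the main obstacle to be the separation step, specifically justifying that the dead edges form a genuine sub-diagram with a clean boundary consisting only of $v'$-wires and dead prongs of live duplicates, with nothing re-entering the live part, so that the single appeal to discard naturality is legitimate; this rests entirely on the forward-closure of dead edges and the uniqueness of edge targets from the first paragraph. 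A secondary point worth verifying is that Lemma~\ref{decor:mat}'s path bijection really does fix labelled domain endpoints, which is what makes connectivity transfer between diagrams of $g$.
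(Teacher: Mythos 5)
The paper does not actually prove this lemma: it is imported as a ``useful fact from \cite{YinZhangModularity}'' and stated without proof, so there is no in-paper argument to compare yours against. Judged on its own terms, your proposal is the natural argument and is essentially correct. The live/dead dichotomy is clean because every node of a diagram in $\cau(G)$ with at least one live output has all of its inputs live (prepend the input edge to the witnessing directed path), and a node all of whose outputs are dead has all of its inputs dead; since duplicates are the only multi-output generators, the boundary of the dead region is exactly the dead domain wires $v'$ together with dead prongs of live duplicates. The dead region is forward-closed and meets no codomain edge, hence is convex and excisable as a single morphism into $\moi$, which terminality of $\moi$ (a consequence of (\ref{discar:nat})) collapses to a discard, and counitality then erases the dangling prongs. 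This is precisely the step that fails without axiom (\ref{discar:nat}), which the paper identifies as the reason the analogous claim in \cite{Fong:thesis} needs repair, so it is a virtue of your argument that it isolates that dependence.

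The one step I would not let pass as written is the transfer of connectivity to \emph{all} diagrams of $g$. Lemma~\ref{decor:mat} gives a bijection $\dot\pi : P_{\Gamma_g} \fun P_{\Upsilon}$ compatible with $\pi$ on non-quasi-terminal nodes, but as stated it does not assert that $\dot\pi$ preserves the labelled \emph{domain} endpoint of a directed path, which is exactly what your argument needs; you flag this yourself, but it should not be left as a ``secondary point.'' The cheapest repair is to bypass Lemma~\ref{decor:mat} and check directly that the property ``the domain edge labelled $i$ begins a directed path into the codomain'' is invariant under each of the four surgery types: coassociativity, counitality and cocommutativity visibly preserve it, and a discard surgery only modifies a subdiagram that has no route to the codomain. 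Since morphisms of $\cau(G)$ are surgery-equivalence classes of diagrams (Terminology~\ref{what:dia}), this invariance yields the ``all string diagrams of $g$'' clause. With that substitution the proof is complete, up to the harmless symmetry needed to read $f = g \otimes \epsilon_{v'}$ literally when $v'$ is not a suffix of $v$.
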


We now proceed to establish some results about causal effect morphisms in a causal theory. A central result has to do with the existence of conditionals in a Markov category, as is defined in \cite{fritz2020synthetic}.

\begin{defn}\label{defn:gen:cond}
Let $f : Z \fun X \otimes Y$ be a morphism in a Markov category $\cat M$.
\begin{itemize}
  \item The \memph{marginal} $f_{X|Z}$ of $f$ over $Y$ is the morphism $\begin{tikzpicture}[xscale = .5, yscale = .5, baseline={([yshift=10pt]current bounding box.south)}]
		\begin{pgfonlayer}{nodelayer}
		\node [style=none] (210) at (3.65, 2.55) {};
		\node [style=none] (211) at (3.65, 2.05) {};
		\node [style=small box] (225) at (3.65, 3.025) {$f$};
		\node [style=none] (226) at (3.425, 4.075) {};
		\node [style=none] (227) at (3.425, 3.525) {};
		\node [style=none] (228) at (3.875, 4.075) {$\bullet$};
		\node [style=none] (229) at (3.875, 3.525) {};
	\end{pgfonlayer}
	\begin{pgfonlayer}{edgelayer}
		\draw [style=wire] (210.center) to (211.center);
		\draw [style=wire] (226.center) to (227.center);
		\draw [style=wire] (228.center) to (229.center);
	\end{pgfonlayer}
\end{tikzpicture} = (1_X \otimes \epsilon_Y) \circ f: Z \fun X$.

  \item $f$ admits a \memph{conditional  over $X$} if there is a morphism $f_{Y|XZ} : X \otimes Z \fun Y$ such that
      \begin{equation}\label{cond:gen:mark}
        \begin{tikzpicture}[xscale = .6, yscale = .45, baseline=(current  bounding  box.center)]
         \begin{pgfonlayer}{nodelayer}
		\node [style=none] (110) at (7.875, 4.4) {};
		\node [style=none] (157) at (7.125, 0.625) {};
		\node [style=none] (158) at (7.125, 0.05) {};
		\node [style=none] (210) at (1.9, 2.55) {};
		\node [style=none] (211) at (1.9, 1.8) {};
		\node [style=wide small box] (225) at (1.9, 3.025) {$f$};
		\node [style=none] (226) at (1.425, 4.325) {};
		\node [style=none] (227) at (1.425, 3.525) {};
		\node [style=none] (228) at (2.375, 4.325) {};
		\node [style=none] (229) at (2.375, 3.525) {};
		\node [style=none] (273) at (6.35, 1.45) {};
		\node [style=wide small box] (274) at (7.35, 4.725) {$f_{Y|XZ}$};
		\node [style=none] (275) at (7.35, 6.025) {};
		\node [style=none] (276) at (7.35, 5.15) {};
		\node [style=wide small box] (281) at (6.35, 1.95) {$f$};
		\node [style=none] (283) at (5.875, 2.45) {};
		\node [style=none] (284) at (6.925, 3.15) {$\bullet$};
		\node [style=none] (285) at (6.825, 2.45) {};
		\node [style=none] (286) at (4.125, 2.925) {$=$};
		\node [style=none] (287) at (6.35, 1.175) {};
		\node [style=none] (288) at (7.85, 1.2) {};
		\node [style=none] (289) at (6.825, 4.2) {};
		\node [style=none] (290) at (6.075, 3.025) {};
		\node [style=none] (292) at (5.55, 3.375) {};
		\node [style=none] (293) at (6.55, 3.4) {};
		\node [style=none] (294) at (5.25, 5.5) {};
	\end{pgfonlayer}
	\begin{pgfonlayer}{edgelayer}
		\draw [style=wire] (157.center) to (158.center);
		\draw [style=wire] (210.center) to (211.center);
		\draw [style=wire] (226.center) to (227.center);
		\draw [style=wire] (228.center) to (229.center);
		\draw [style=wire] (275.center) to (276.center);
		\draw [style=wire, in=90, out=-90] (284.center) to (285.center);
		\draw [style=wire] (157.center) to (158.center);
		\draw [style=wire] (157.center) to (158.center);
		\draw [style=wire] (210.center) to (211.center);
		\draw [style=wire] (226.center) to (227.center);
		\draw [style=wire] (228.center) to (229.center);
		\draw [style=wire] (275.center) to (276.center);
		\draw [style=wire, in=90, out=-90] (284.center) to (285.center);
		\draw [style=wire, bend right=90, looseness=1.25] (287.center) to (288.center);
		\draw [style=wire] (287.center) to (273.center);
		\draw [style=wire, bend right=90, looseness=1.25] (292.center) to (293.center);
		\draw [style=wire, in=90, out=-90, looseness=1.25] (290.center) to (283.center);
		\draw [style=wire, in=-120, out=90] (293.center) to (289.center);
		\draw [style=wire, in=-90, out=90] (292.center) to (294.center);
		\draw [style=wire] (110.center) to (288.center);
		\draw [style=wire] (110.center) to (288.center);
	\end{pgfonlayer}
    \end{tikzpicture}
    \end{equation}
\end{itemize}

\end{defn}

The marginals of a causal effect morphism behave as expected.

\begin{lem}\label{three:cau:mar}
Let  $u$, $v$, and $w$ be singular variables with $v \cap w = \0$. Then the marginal of the causal effect $[vw \| u]$ over $v$ is the causal effect $[w \| u]$ (and that over $w$ is $[v \| u]$).
\end{lem}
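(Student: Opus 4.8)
The plan is to argue entirely at the level of string diagrams, using the explicit construction of $[vw \| u]$ in Definition~\ref{gen:condi} together with the surgery calculus recorded in Lemma~\ref{decor:mat}. First I would note the role of the hypothesis: since $v$, $w$ are singular and $v \cap w = \0$, the concatenation $vw$ is again singular, so $[vw \| u]$ is defined, and moreover the atomic variables of $v$ and of $w$ partition the codomain $vw$, so that ``marginalizing over $v$'' is unambiguous. By Definition~\ref{defn:gen:cond} this marginal is obtained by postcomposing $[vw \| u]$ with the discards $\epsilon_i$ on every output wire labelled by an atomic variable $i \in v$ (reordering the codomain by symmetries so the retained $w$-wires come out in the order prescribed by $\dot V$). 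Thus it suffices to show that the diagram $\stri_{[vw \| u]}$ with discards attached to all its $v$-outputs represents the same morphism as $\stri_{[w \| u]}$.

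Next I would record the one local move that drives everything. If every output wire of a component $\stri_i$ (for $i \in V(G_{u \rightarrow vw})$) carries a discard, then $\stri_i$ may be replaced by discards on its input wires: for a mechanism vertex, counitality (\ref{mon:law}) collapses the multiplier $\iota_{i \rightarrow \bar i \otimes i}$ (or $\iota_{i \rightarrow \bar i}$) against the total discard to $\epsilon_i$, and then discard surgery (\ref{discar:nat}) turns $\epsilon_i \circ \kappa_i$ into $\epsilon_{\pa(i)}$; for an input vertex $i \in u$, counitality alone collapses the multiplier to $\epsilon_i$ on the single input wire. Dually, if only one prong of a duplicate in $\stri_i$ is discarded, counitality removes that prong, reducing the fan-out by one. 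All of these are surgeries, so by Lemma~\ref{decor:mat} they preserve the represented morphism.

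The heart of the argument is a peeling induction organized by the set $R := V(G_{u \rightarrow vw}) \mi V(G_{u \rightarrow w})$. The crucial combinatorial observation is that $R$ is closed under taking children in $G_{u \rightarrow vw}$: if $i \in R$ and $i \rightarrow j$ lies in $G_{u \rightarrow vw}$, then $j \notin u$ (no edge of $G_{u \rightarrow vw}$ ends in $u$), and if $j$ led to $w$ through $G_{u \rightarrow vw}$ then so would $i$, contradicting $i \in R$; hence $j \in R$. Since $w \sub V(G_{u \rightarrow w})$ we have $R \cap w = \0$, so every codomain wire reachable from an $R$-vertex is labelled by an atomic variable of $v$, i.e.\ is discarded. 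Processing $R$ in reverse topological order and applying the local move, I would show each $R$-component is erased and replaced by discards pushed onto its parents. For a surviving vertex $m \in V(G_{u \rightarrow w})$, the wires to its $R$-children, and its extra $v$-copy when $m \in u \cap v$, get discarded and trimmed by the fan-out move; here I would verify the edge-level identity that the children of $m$ in $G_{u \rightarrow vw}$ lying in $V(G_{u \rightarrow w})$ are exactly its children in $G_{u \rightarrow w}$, so that the surviving component of $m$ is precisely the one prescribed by the construction of $[w \| u]$. Input vertices of $u$ not connected to $w$ collapse to discards in both constructions and match up. What remains is wired according to $G_{u \rightarrow w}$, hence is a string diagram of $[w \| u]$; interchanging $v$ and $w$ gives the parenthetical claim.

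I expect the main obstacle to be the bookkeeping in the final step, namely confirming that the peeled diagram is genuinely a string diagram of $\stri_{[w \| u]}$ rather than merely resembling it. This hinges on the edge-level matching of children just mentioned and on the careful handling of vertices shared among $u$, $v$, $w$ (especially $u \cap v$, where a discarded extra copy appears). Once the surviving diagram is seen to have the same component labels and wiring as $\stri_{[w \| u]}$, coherence of the graphical language together with Lemma~\ref{decor:mat} guarantees that the two represent one and the same morphism.
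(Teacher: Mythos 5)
Your argument is correct and is essentially the paper's own proof: both push the discards on the $v$-labelled outputs upstream through $\stri_{[vw \| u]}$ via counitality and discard surgery until the surviving components are exactly those prescribed for $\stri_{[w \| u]}$. The only difference is organizational --- the paper first reduces to atomic $v$ by induction and then recurses from $v$ to its parents, whereas you discard all $v$-outputs at once and make explicit the set $R = V(G_{u \rightarrow vw}) \mi V(G_{u \rightarrow w})$ of components to be erased (together with its closure under children), which the paper leaves implicit in the phrase ``when there are no more surgeries to be performed.''
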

\begin{proof}
By induction on the cardinality of $v$, this is immediately reduced to the case where $v$ is an atomic variable. We examine how composing with $\epsilon_{v}$ changes the component $\stri_v$ and other subsequently impacted components $\stri_i$  without changing the morphism represented (recall Definition \ref{gen:condi}). If $\stri_v$ has more than one output wire then, by  counitality surgery,  $\iota_{v \rightarrow \bar v \otimes v} $ is changed to $\iota_{v \rightarrow \bar v} $ and $\stri_v$ is thus changed without impacting any other $\stri_i$. If $\stri_v$ has only one output wire then, by discard surgery, it is reduced to $\bigotimes_{j \in \pa(v)} \epsilon_{j}$, where $\pa(v)$ is computed in $G_{u \rightarrow vw}$. In that case, for any $j \in \pa(v)$, we ask the same question that whether $\stri_j$ has more than one output wire or not, and proceed accordingly as before. Observe that, when there are no more surgeries to be performed, the remaining components $\stri_j$, including the modified ones, are exactly those required to construct $\stri_{[w \| u]}$. The lemma follows.
\end{proof}

Note that this lemma relies on the axiom (\ref{discar:nat}), or discard surgery, which is not imposed in \cite{Fong:thesis} as we do here. To see it, consider again the third graph in  Example~\ref{exa:cau:con} and the marginal $(1_x \otimes \epsilon_y) \circ [xy \| z]$ of the causal effect $[xy\|z]$ over $y$, then we have:
\begin{equation}\label{fong:counter}
\begin{tikzpicture}[xscale = .7, yscale = .5, baseline=(current  bounding  box.center)]
\begin{pgfonlayer}{nodelayer}
		\node [style=none] (17) at (5.75, 1.1) {};
		\node [style=none] (18) at (6.25, 0.45) {};
		\node [style=none] (19) at (6.25, 1.5) {};
		\node [style=none] (20) at (5.75, 1.5) {};
		\node [style=small box] (22) at (6, 2) {$\kappa_y$};
		\node [style=none] (23) at (4.425, 1.1) {};
		\node [style=none] (24) at (6, 2.5) {};
		\node [style=none] (25) at (6, 3.425) {$\bullet$};
		\node [style=none] (26) at (4.425, 3.025) {};
		\node [style=none] (27) at (5.1, 0.2) {$\bullet$};
		\node [style=none] (28) at (5.1, 0.6) {};
		\node [style=none] (29) at (3.925, 3.025) {$x$};
		\node [style=none] (30) at (6.65, 0.525) {$z$};
		\node [style=none] (37) at (6.525, 2.925) {$y$};
		\node [style=none] (40) at (8.225, 1.675) {$=$};
		\node [style=none] (212) at (13.075, 1.675) {$=$};
		\node [style=none] (213) at (14.725, 2.4) {};
		\node [style=none] (214) at (14.725, 1.15) {$\bullet$};
		\node [style=none] (215) at (15.925, 2.35) {$\bullet$};
		\node [style=none] (216) at (15.925, 1.125) {};
		\node [style=none] (217) at (16.275, 1.45) {$z$};
		\node [style=none] (218) at (14.4, 1.925) {$x$};
		\node [style=none] (219) at (11, 2.35) {};
		\node [style=none] (220) at (11.5, 1.2) {};
		\node [style=none] (221) at (11.5, 2.25) {$\bullet$};
		\node [style=none] (222) at (11, 2.5) {$\bullet$};
		\node [style=none] (224) at (9.675, 2.35) {};
		\node [style=none] (227) at (9.675, 2.525) {};
		\node [style=none] (228) at (10.35, 0.95) {$\bullet$};
		\node [style=none] (229) at (10.35, 1.85) {};
		\node [style=none] (230) at (9.95, 1.475) {$x$};
		\node [style=none] (231) at (11.9, 1.275) {$z$};
	\end{pgfonlayer}
	\begin{pgfonlayer}{edgelayer}
		\draw [style=wire] (17.center) to (20.center);
		\draw [style=wire] (18.center) to (19.center);
		\draw [style=wire] (23.center) to (26.center);
		\draw [style=wire] (24.center) to (25.center);
		\draw [style=wire, bend right=90, looseness=1.25] (23.center) to (17.center);
		\draw [style=wire] (27.center) to (28.center);
		\draw [style=wire] (213.center) to (214.center);
		\draw [style=wire] (215.center) to (216.center);
		\draw [style=wire] (219.center) to (222.center);
		\draw [style=wire] (220.center) to (221.center);
		\draw [style=wire] (224.center) to (227.center);
		\draw [style=wire, bend right=90, looseness=1.25] (224.center) to (219.center);
		\draw [style=wire] (228.center) to (229.center);
	\end{pgfonlayer}
\end{tikzpicture}
\end{equation}
where the first equality is begotten by discard surgery  and the second one by counitality surgery. Without (\ref{discar:nat}), the first equality would fail.

Related to this observation is a claim in \cite[Proposition~4.2]{Fong:thesis} that if $v$, $w$ are atomic variables and $v$ is not an ancestor of $w$ in $G$, then there exists no morphism $f: v \fun w$ in $\cat{Cau}(G)$ such that $v$ and $w$ are connected. Again, this is not quite right without (\ref{discar:nat}), as shown by the example in (\ref{fong:counter}). Now that we have imposed (\ref{discar:nat}), this claim does hold, and is immediate from Lemma \ref{infe:ances}:

\begin{prop}[\cite{Fong:thesis}]
\label{connect:ancestral}
Let $v$, $w$ be atomic variables. If there exists a morphism $v \fun w$ in which $v$, $w$ are connected then $v$ is an ancestor of $w$. Conversely, if $v$ is an ancestor of $w$ then they are connected by a directed path in $\stri_{[w \| v]}$.
\end{prop}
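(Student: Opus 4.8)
The plan is to prove the two implications separately, using Lemma~\ref{infe:ances} for the forward direction and the explicit construction of $\stri_{[w\|v]}$ from Definition~\ref{gen:condi} for the converse.

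For the forward direction, suppose $f : v \fun w$ is a morphism in which the atomic variables $v$, $w$ are connected. First I would apply Lemma~\ref{infe:ances} to $f$, obtaining a sub-variable $v' \sub v$ and a morphism $g : v / v' \fun w$ with $f = g \otimes \epsilon_{v'}$ such that, in every string diagram of $g$, each atomic variable of $v/v'$ reaches an atomic variable of $w$ by a directed path. Since $v$ is atomic, $v'$ is either $\0$ or $v$. If $v' = v$, then $f$ factors through $\epsilon_v$: the domain edge $v$ runs straight into a discard and has no incident edge other than the one into that dead-end, so $v$ is disconnected from $w$ in that string diagram; as connectivity between domain and codomain is an invariant of the morphism (cf.\ Lemma~\ref{decor:mat}), this would contradict the hypothesis. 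Hence $v' = \0$, so $f = g$ and $v$ is connected to $w$ by a \emph{directed} path in every string diagram of $f$. The conceptually load-bearing step is that the discarded part $v'$ furnished by Lemma~\ref{infe:ances} is exactly the part of the domain disconnected from $w$, so that ``$v$, $w$ connected'' collapses to directed-path connectivity.

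It then remains to translate a directed path in the diagram into a directed path in $G$. Along any directed path the edge label is unchanged by multiplier nodes (a duplicate, symmetry, or identity routes a wire without altering its atomic label), and changes only when the path passes through a mechanism node $\kappa_i$, which it must enter on an input wire labelled by some $j \in \pa(i)$ and leave on the output wire labelled $i$; this crossing witnesses the edge $j \rightarrow i$ of $G$. Reading off the successive mechanism nodes traversed by the directed path from $v$ to $w$ yields a directed walk $v \rightarrow \cdots \rightarrow w$ in $G$, so $v$ is an ancestor of $w$ (trivially so when $v = w$).

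For the converse, assume $v$ is an ancestor of $w$. If $v = w$ then $G_{v \rightarrow w}$ has the single vertex $v$ with no children, so $\stri_{[w\|v]}$ reduces to $1_v$ and the claim is immediate. If $v \neq w$, fix a directed path $v \rightarrow j_1 \rightarrow \cdots \rightarrow j_{k} = w$ in $G$. By acyclicity this path cannot revisit $v$, so it neither passes through nor ends in $v$; hence it ends in $w$ without travelling toward $v$, and all its vertices lie in $V(G_{v \rightarrow w})$. In the construction of $\stri_{[w\|v]}$ (with $t = v$ as source), the component $\stri_{j_l}$ for each internal $j_l$ and for $w$ contains the mechanism $\kappa_{j_l}$, whose inputs are $\pa(j_l)$; since $j_{l-1} \in \pa(j_l)$ and $j_l$ is a child of $j_{l-1}$ in $G_{v\rightarrow w}$, the wiring prescribed in Definition~\ref{gen:condi} connects an output copy of $j_{l-1}$ from $\stri_{j_{l-1}}$ to the $j_{l-1}$-input of $\kappa_{j_l}$. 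Tracing from the domain edge $v$ through the multiplier $\stri_v$, into $\kappa_{j_1}$, out as $j_1$, and so on up to $\kappa_w$ and its extra output copy of $w$ in the codomain, produces a directed path in $\stri_{[w\|v]}$ connecting $v$ to $w$. The main obstacle here, mirroring the forward direction, is precisely this dictionary between parent--child edges of $G$ and the mechanism-node crossings along a diagrammatic directed path, together with the check (via acyclicity) that a chosen $G$-path survives intact inside $G_{v\rightarrow w}$ so that the construction's wiring realizes it as a single directed path.
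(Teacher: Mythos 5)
Your proposal is correct and follows essentially the same route as the paper, which derives the forward direction from Lemma~\ref{infe:ances} (noting that for atomic $v$ the discarded part $v'$ must be empty, whence a directed path from $v$ to $w$ exists and its mechanism-node crossings trace out a directed walk in $G$) and obtains the converse directly from the construction of $\stri_{[w\|v]}$ in Definition~\ref{gen:condi}. The paper states this only as ``immediate from Lemma~\ref{infe:ances}''; your write-up is a faithful elaboration of that argument.
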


This fact signals that $\cau(G)$ is ``purely causal'', in that all connected morphisms in the category go from causal ancestors to descendants. As a result, merely ``associational'' or ``evidential'' relations are not expressed by any morphism in the category. (Recall the ``no junk'' property of a free category.)

In some Markov categories such as $\cat{FinStoch}$ mentioned earlier, every morphism of the form $f : Z \fun X \otimes Y$ admits conditionals over both objects in the codomain \cite{fritz2020synthetic}, but this is not the case in $\cat{Cau}(G)$. Take, for instance, the simple graph $x \rightarrow y$ and consider the exogenous effect $[xy] : \0 \fun xy$. If a conditional $[xy]_{x|y} : y \fun x$ over $y$ existed then we would have
\begin{equation}
\begin{tikzpicture}[xscale = .7, yscale = .5, baseline=(current  bounding  box.center)]
\begin{pgfonlayer}{nodelayer}
		\node [style=none] (17) at (5.75, 1.1) {};
		\node [style=none] (20) at (5.75, 1.5) {};
		\node [style=small box] (22) at (5.75, 2) {$\kappa_y$};
		\node [style=none] (23) at (4.425, 1.1) {};
		\node [style=none] (24) at (5.75, 2.5) {};
		\node [style=none] (25) at (5.75, 3.025) {};
		\node [style=none] (26) at (4.425, 3.025) {};
		\node [style=none] (27) at (5.1, 0.025) {$\bullet$};
		\node [style=none] (28) at (5.1, 0.6) {};
		\node [style=none] (29) at (3.9, 3.025) {$x$};
		\node [style=none] (37) at (6.25, 3) {$y$};
		\node [style=none] (48) at (7.825, 1.5) {};
		\node [style=none] (49) at (7.825, 1.5) {$=$};
		\node [style=none] (50) at (11.475, -0.65) {$\bullet$};
		\node [style=none] (53) at (11.475, -0.175) {};
		\node [style=small box] (54) at (11.475, 0.325) {$\kappa_y$};
		\node [style=none] (56) at (11.475, 0.825) {};
		\node [style=none] (57) at (11.475, 1.275) {};
		\node [style=none] (62) at (13.275, 3.375) {$y$};
		\node [style=none] (68) at (10.325, 2.15) {};
		\node [style=none] (69) at (12.65, 2.15) {};
		\node [style=none] (70) at (11.475, 1.275) {};
		\node [style=none] (72) at (10.325, 3.95) {};
		\node [style=none] (73) at (10.325, 3.35) {};
		\node [style=small box] (75) at (10.325, 2.85) {$[xy]_{x|y}$};
		\node [style=none] (76) at (10.325, 2.35) {};
		\node [style=none] (77) at (12.75, 3.375) {};
		\node [style=none] (78) at (10.825, 3.975) {$x$};
	\end{pgfonlayer}
	\begin{pgfonlayer}{edgelayer}
		\draw [style=wire] (17.center) to (20.center);
		\draw [style=wire] (23.center) to (26.center);
		\draw [style=wire] (24.center) to (25.center);
		\draw [style=wire, bend right=90, looseness=1.25] (23.center) to (17.center);
		\draw [style=wire] (27.center) to (28.center);
		\draw [style=wire] (50.center) to (53.center);
		\draw [style=wire] (56.center) to (57.center);
		\draw [style=wire, bend left=90, looseness=1.25] (69.center) to (68.center);
		\draw [style=wire] (72.center) to (73.center);
		\draw [style=wire, in=-90, out=90] (69.center) to (77.center);
		\draw [style=wire] (68.center) to (76.center);
	\end{pgfonlayer}
\end{tikzpicture}
\end{equation}
Since the duplicate $\delta_y$ does not occur on the left-hand side, by the first claim of Lemma~\ref{decor:mat}, its displayed occurrence on the right-hand side must be quasi-terminal,  but then $x$, $y$ cannot  be connected by a splitter path, violating the third claim of Lemma~\ref{decor:mat}. So the equality is not possible. On the other hand, $[xy]$ obviously admits a conditional over $x$, which is just $[y\| x] = \kappa_y$.

This simple example actually illustrates a general fact: for pairwise disjoint singular variables in $\cau(G)$, $u$, $v$,  $w$, if $[vw \| u]$ admits a conditional over $v$,  the conditional must be $[w \| uv]$.  We will leave the proof of this fact to another occasion, since it is a little involved and not directly relevant to the intended contributions of this paper.  For the present purpose, the directly relevant question is when $[w \| uv]$ is a conditional of $[w \| uv]$, or in other words, when the following \memph{decomposition} or \memph{disintegration} of a causal effect holds:

\begin{equation}\label{effect:decom}
\begin{tikzpicture}[xscale = .6, yscale = .45, baseline=(current  bounding  box.center)]
\begin{pgfonlayer}{nodelayer}
		\node [style=none] (110) at (7.875, 4.4) {};
		\node [style=none] (157) at (7.125, 0.625) {};
		\node [style=none] (158) at (7.125, 0.05) {};
		\node [style=none] (210) at (1.9, 2.55) {};
		\node [style=none] (211) at (1.9, 1.8) {};
		\node [style=wide small box] (225) at (1.9, 3.025) {$[vw \| u]$};
		\node [style=none] (226) at (1.425, 4.325) {};
		\node [style=none] (227) at (1.425, 3.525) {};
		\node [style=none] (228) at (2.375, 4.325) {};
		\node [style=none] (229) at (2.375, 3.525) {};
		\node [style=none] (273) at (6.35, 1.45) {};
		\node [style=wide small box] (274) at (7.35, 4.725) {$ [w\|vu] $};
		\node [style=none] (275) at (7.35, 6.025) {};
		\node [style=none] (276) at (7.35, 5.15) {};
		\node [style=wide small box] (281) at (6.35, 1.95) {$[v \| u]$};
		\node [style=none] (283) at (6.325, 2.45) {};
		\node [style=none] (286) at (4.125, 2.925) {$=$};
		\node [style=none] (287) at (6.35, 1.175) {};
		\node [style=none] (288) at (7.85, 1.2) {};
		\node [style=none] (289) at (6.825, 4.2) {};
		\node [style=none] (290) at (6.325, 3.025) {};
		\node [style=none] (292) at (5.8, 3.375) {};
		\node [style=none] (293) at (6.8, 3.4) {};
		\node [style=none] (294) at (5.25, 5.5) {};
	\end{pgfonlayer}
	\begin{pgfonlayer}{edgelayer}
		\draw [style=wire] (157.center) to (158.center);
		\draw [style=wire] (210.center) to (211.center);
		\draw [style=wire] (226.center) to (227.center);
		\draw [style=wire] (228.center) to (229.center);
		\draw [style=wire] (275.center) to (276.center);
		\draw [style=wire] (157.center) to (158.center);
		\draw [style=wire] (157.center) to (158.center);
		\draw [style=wire] (210.center) to (211.center);
		\draw [style=wire] (226.center) to (227.center);
		\draw [style=wire] (228.center) to (229.center);
		\draw [style=wire] (275.center) to (276.center);
		\draw [style=wire, bend right=90, looseness=1.25] (287.center) to (288.center);
		\draw [style=wire] (287.center) to (273.center);
		\draw [style=wire, bend right=90, looseness=1.25] (292.center) to (293.center);
		\draw [style=wire, in=90, out=-90, looseness=1.25] (290.center) to (283.center);
		\draw [style=wire] (293.center) to (289.center);
		\draw [style=wire, in=-90, out=90] (292.center) to (294.center);
		\draw [style=wire] (110.center) to (288.center);
		\draw [style=wire] (110.center) to (288.center);
	\end{pgfonlayer}
\end{tikzpicture}
\end{equation}
Call the property expressed by (\ref{effect:decom}) the \memph{decomposability} of $[vw\|u]$ over $v$. We now introduce some graphical conditions relevant to characterizing  decomposability and other significant concepts to be introduced presently:

\begin{defn}\label{def:separation}
Let  $i$, $j$ be two distinct vertices in $G$. A \memph{forward trek} from $i$ to $j$ in $G$ is a directed path from $i$ to $j$. A \memph{backward trek} from $i$ to $j$ is a directed path from $j$ to $i$, or a disjoint union of two directed paths joined at a distinct starting vertex $k$ (i.e., $i\leftarrow \dots \leftarrow k \rightarrow \dots \rightarrow j$).

Given $X, Y \sub V(G)$, a \memph{proper} forward (respectively, backward) trek from $X$ to $Y$ is a forward (respectively, backward) trek from some $i\in X$ to some $j\in Y$ that does not contain any other vertex in $X$ or in $Y$.

We say that
\begin{itemize}
  \item $X$ is \memph{forward-$t$-separated} from $Y$ by $Z$ if every proper forward trek in $G$ from $X$ to $Y$ contains some $k \in Z$;
  \item $X$ is \memph{backward-$t$-separated} from $Y$ by $Z$ if every proper backward trek in $G$ from $X$ to $Y$ contains some $k \in Z$;
  \item $X$ and $Y$ are \memph{$t$-separated} by $Z$ if $X$ is both forward-$t$-separated and backward-$t$-separated from $Y$ by $Z$.
\end{itemize}
\end{defn}

Observe that $t$-separation is a symmetric relation, but forward-$t$-separation and backward-$t$-separation are not.
Also note that $t$-separation is a simpler condition than the well-known $d$-separation \cite{Pearl1988}; the former is concerned only with blocking treks, whereas the latter also has an explicit requirement for paths that contain colliders, where two arrows point at the same vertex (i.e., $i\rightarrow k \leftarrow j$).

For the rest of this section, let $u$, $v$, and $w$ be pairwise disjoint singular variables in $\cau(G)$.

\begin{thm}\label{thm: main1}
$[vw \| u]$ is decomposable over $v$ if and only if $v$ is backward-$t$-separated from $w$ by $u$.
\end{thm}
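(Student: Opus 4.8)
The plan is to prove both implications by working directly with the component string diagrams $\stri_{[vw\|u]}$, $\stri_{[v\|u]}$, and $\stri_{[w\|vu]}$ supplied by Definition~\ref{gen:condi}, and to detect equality or inequality of the two sides of (\ref{effect:decom}) through the label-compatible invariants of Lemma~\ref{decor:mat}. The right-hand side of (\ref{effect:decom}) has an evident string diagram: take $\stri_{[v\|u]}$, duplicate each of its $v$-output wires and each input $u$-wire, and feed one copy of the produced $v$ together with a copy of $u$ into $\stri_{[w\|vu]}$. (That the left factor is indeed the marginal of $[vw\|u]$ over $w$ is Lemma~\ref{three:cau:mar}.) Throughout, I use the basic bookkeeping fact that each generator $\kappa_i$ occurs exactly once, in the single component $\stri_i$, of any such diagram, so that counting $\kappa_i$-labelled nodes is meaningful.

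For necessity I argue the contrapositive. Suppose $v$ is not backward-$t$-separated from $w$ by $u$, and fix a proper backward trek $T$ from $v$ to $w$ avoiding $u$ (Definition~\ref{def:separation}). If $T$ is a directed path $w'\to\cdots\to v'$ with $w'\in w$, $v'\in v$, I single out $\kappa_{w'}$; if $T$ is a common-cause trek $v'\leftarrow\cdots\leftarrow k\to\cdots\to w'$, I single out $\kappa_{k}$, noting that properness together with $u$-avoidance forces $k\notin u\cup v\cup w$. In either case the distinguished mechanism occurs exactly once in $\stri_{[vw\|u]}$ but twice in the displayed diagram of the right-hand side: once in $\stri_{[v\|u]}$, because the relevant vertex is an ancestor of $v'$ along a $u$-avoiding path and so lies in $V(G_{u\to v})$, and once in $\stri_{[w\|vu]}$, because its $w$-ward segment avoids both $u$ and $v$ and so survives in $V(G_{vu\to w})$. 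As each of these occurrences lies on a directed path to the codomain, all are non-quasi-terminal; hence the two diagrams carry different multisets of labels on their non-quasi-terminal nodes, and no label-compatible bijection as in the first bullet of Lemma~\ref{decor:mat} can exist. Therefore (\ref{effect:decom}) fails.

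For sufficiency I would first isolate the combinatorial core: assuming backward-$t$-separation, every vertex $i\in V(G_{u\to vw})$ with $i\notin u\cup v$ lies in exactly one of $V(G_{u\to v})$ or $V(G_{vu\to w})$. Membership in at least one follows by tracing a $u$-avoiding path from $i$ to its first vertex in $v\cup w$: if that vertex is in $v$ the initial segment certifies $i\in V(G_{u\to v})$, and if it is in $w$ the segment also avoids $v$ and certifies $i\in V(G_{vu\to w})$. Membership in both is impossible, since it would produce a $u$-avoiding directed path $i\to\cdots\to v'$ and a $(u,v)$-avoiding directed path $i\to\cdots\to w'$; truncating these at the first $v$- or $w$-vertex encountered assembles a proper backward trek from $v$ to $w$ avoiding $u$ (directed if the $v$-path meets $w$, common-cause otherwise), contradicting the hypothesis. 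This partition lets me sort the components of $\stri_{[vw\|u]}$ into a $v$-block and a $w$-block that meet only along the $u$- and $v$-wires: the $v$-block is $\stri_{[v\|u]}$ with each $v$-output additionally duplicated to supply the $w$-block, and the $w$-block is $\stri_{[w\|vu]}$ reading these supplied copies as its $v$-inputs. The one subtlety is that a vertex $v'\in v$ (or $u'\in u$) may have children in both blocks, so $\kappa_{v'}$ bears more output wires in $\stri_{[vw\|u]}$ than in $\stri_{[v\|u]}$; but splitting its output as ``duplicate once, then route both blocks'' versus ``duplicate within each block'' agree by coassociativity (\ref{mon:law}), which is exactly the staged duplication of $v$ and $u$ performed on the right-hand side of (\ref{effect:decom}). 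Reassembling the blocks then shows the two sides represent the same morphism.

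The hard part will be the sufficiency gluing: converting the clean vertex partition into an honest equality of diagrams requires reconciling the staged duplications of the shared $u$-wires and of the forward-trek $v$-wires—applying coassociativity and cocommutativity surgeries at each branch point—and confirming that the cut introduces no spurious connection across the two blocks. The necessity direction, by contrast, collapses to a single label count once the doubly-occurring mechanism attached to the offending trek is located, and should be routine.
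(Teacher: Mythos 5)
Your argument is correct, and it splits into a half that matches the paper and a half that does not. For sufficiency, your partition claim---that under backward-$t$-separation every vertex of $G_{u \rightarrow vw}$ outside $u \cup v$ lies in exactly one of $V(G_{u \rightarrow v})$ and $V(G_{vu \rightarrow w})$, while vertices of $u \cup v$ contribute a component that splits across the two blocks via coassociativity of duplication---is precisely the paper's case analysis on $i \notin uv$, $i \in uv$, and $j \notin uv$, just organized as a lemma up front; both treatments leave the same routine details implicit (e.g., that the two branches assembled into a common-cause trek can be made disjoint by passing to a deepest common vertex, and that the partition of vertices also forces the children counts, hence the components $\stri_i$ themselves, to agree). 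For necessity, however, you take a genuinely different and arguably more elementary route. The paper supposes a proper backward trek from $a \in v$ to $b \in w$ avoiding $u$, uses the first claim of Lemma~\ref{decor:mat} to conclude $\kappa_b$ cannot occur in $\stri_{[v\|u]}$, and then invokes the second and third claims to translate the trek into a directed or splitter path on the right-hand side of (\ref{effect:decom}) that cannot exist. You instead locate a single mechanism ($\kappa_{w'}$ for a directed trek, $\kappa_k$ for a common-cause trek) that occurs once, non-quasi-terminally, in $\stri_{[vw\|u]}$ but twice in the composite diagram of the right-hand side---once in each factor, each occurrence feeding a codomain wire---so that already the label-compatible bijection of the first claim of Lemma~\ref{decor:mat} is impossible. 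This buys a shorter argument that needs only one of the three invariants and avoids reasoning about how paths translate across diagrams; the paper's path-tracing version is the one that generalizes to the ``only if'' directions of Theorems~\ref{thm:main2} and~\ref{thm:main3}, where no mechanism is duplicated and the connectivity invariants are genuinely needed. Your verification that the doubled mechanism is non-quasi-terminal in the composite (not merely in each factor) is the one point that must not be skipped, and you have it right.
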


\begin{proof}
For the ``if'' direction, suppose $v$ is backward-$t$-separated from $w$ by $u$, and we show that the equality (\ref{effect:decom}) holds. We examine the components $\stri_i$, $\stri_j$ for $i \in V(G_{u \rightarrow v})$, $j \in V(G_{uv \rightarrow w})$ and show that, together with $\delta_{v}$ and $\delta_{u}$ on the righthand side of (\ref{effect:decom}), they are exactly those needed to construct the string diagram $\stri_{[vw \| u]}$. There are the following cases.
\begin{itemize}
  \item $i \notin uv$. Then there is a directed path from $i$ to some $i' \in v$ in $G$ that does not pass through $u$. If $i$ also occurs in $G_{uv \rightarrow w}$ then either there is a directed path from it to some $j' \in w$ in $G$ that does not pass through $uv$, or it is contained in $w$, both of which are prohibited by the backward-$t$-separation of $v$ from $w$ by $u$. So  $\stri_i$ is the same in $\stri_{[vw \| u]}$ and $\stri_{[v \| u]}$.
  \item $i \in uv$. Let $i'$, $j'$ be any children of $i$ in $G_{u \rightarrow v}$, $G_{uv \rightarrow w}$, respectively. So $i' \notin u$ and $j' \notin uv$. If $i' \notin v$ then, by the case just considered, $i'$ does not occur in $G_{uv \rightarrow w}$ at all; for the same reason, $j'$ does not occur in $G_{u \rightarrow v}$. On the other hand, if $i' \in v$ then it cannot be a child  of $i$ in $G_{uv \rightarrow w}$. So $\stri_i$ in $\stri_{[vw \| u]}$ is the juxtaposition of  the two $\stri_i$ in $\stri_{[v \| u]}$ and $\stri_{[w \| uv]}$ joined by $\delta_{u}$.
  \item $j \notin uv$. Then $j$ is an ancestor of some $j' \in w$ in $G_{(uv \rightarrow w}$, and hence cannot occur in $G_{u \rightarrow v}$, again due to the backward-$t$-separation of $v$ from $w$ by $u$. So $\stri_j$ is the same in $\stri_{[vw \| u]}$ and $\stri_{[w \| uv]}$.
\end{itemize}
This establishes (\ref{effect:decom}).

For the ``only if'' direction, suppose that (\ref{effect:decom}) holds. Let $\pi$ be a proper backward trek from $a \in v$ to $b \in w$ in $G$. Suppose for contradiction that $\pi$ does not contain any vertex in $u$. By the first claim of Lemma~\ref{decor:mat},  $\kappa_b$ cannot occur  in $\stri_{[v \| u]}$. Thus, by the other two claims of Lemma~\ref{decor:mat}, $\pi$  would translate into  a directed path $b \rightsquigarrow a$ in $\stri_{[w \| v u]}$, which is not possible, or a splitter path between  $a$ and $b$ in $\stri_{[vw \| u]}$ that does not pass  through $v$ in the direction of $b$ and hence must pass through $u$, contradiction again.
\end{proof}

Readers familiar with Pearl's do-calculus \cite{Pearl1995} may have noticed the close affinity between backward-$t$-separation and the condition for Rule 2 of the do-calculus. Before we elaborate on the connection, let us introduce two more notions to fully match the do-calculus. One of them (``conditional independence'') is introduced in \cite{fritz2020synthetic} for all Markov categories.
\begin{defn} \label{def:irre:ind}
Let $\cat M$ be a Markov category.
\begin{itemize}
\item Let $f : X \otimes Z \fun Y$ be a morphism in $\cat M$. We say that $X$ is \memph{conditionally irrelevant to $Y$ given $Z$ over $f$} if there is a morphism $f_{Y|Z} : Z \fun Y$ such that
\begin{equation}\label{cau:ind:z}
     \begin{tikzpicture}[xscale = .6, yscale = .45, baseline=(current  bounding  box.center)]
    \begin{pgfonlayer}{nodelayer}
		\node [style=none] (242) at (0.825, 2.075) {};
		\node [style=wide small box] (254) at (0.825, 2.575) {$f_{Y|Z}$};
		\node [style=none] (255) at (0.8, 3.85) {};
		\node [style=none] (256) at (0.8, 3.075) {};
		\node [style=none] (260) at (-1.15, 3.425) {$\bullet$};
		\node [style=none] (261) at (-1.15, 1.525) {};
		\node [style=none] (271) at (0.825, 1.275) {};
		\node [style=none] (210) at (-4.3, 3.075) {};
		\node [style=none] (211) at (-4.3, 3.825) {};
		\node [style=wide small box] (225) at (-4.3, 2.6) {$f$};
		\node [style=none] (226) at (-3.825, 1.3) {};
		\node [style=none] (227) at (-3.825, 2.1) {};
		\node [style=none] (228) at (-4.775, 1.3) {};
		\node [style=none] (229) at (-4.775, 2.1) {};
		\node [style=none] (314) at (-2.3, 2.425) {$=$};
	\end{pgfonlayer}
	\begin{pgfonlayer}{edgelayer}
		\draw [style=wire] (210.center) to (211.center);
		\draw [style=wire] (226.center) to (227.center);
		\draw [style=wire] (228.center) to (229.center);
		\draw [style=wire] (255.center) to (256.center);
		\draw [style=wire] (260.center) to (261.center);
		\draw [style=wire] (210.center) to (211.center);
		\draw [style=wire] (226.center) to (227.center);
		\draw [style=wire] (228.center) to (229.center);
		\draw [style=wire] (242.center) to (271.center);
	\end{pgfonlayer}
     \end{tikzpicture}
    \end{equation}

\item Let $f : Z \fun X \otimes Y$ be a morphism in $\cat M$. We say that $X$, $Y$ are \memph{conditionally independent given $Z$ over $f$} if
   \begin{equation}\label{cond:ind}
     \begin{tikzpicture}[xscale = .6, yscale = .45, baseline=(current  bounding  box.center)]
    \begin{pgfonlayer}{nodelayer}
		\node [style=none] (241) at (-0.3, 2.525) {};
		\node [style=none] (242) at (1.9, 2.525) {};
		\node [style=wide small box] (254) at (1.9, 3.025) {$f$};
		\node [style=none] (255) at (2.375, 4.3) {};
		\node [style=none] (256) at (2.375, 3.525) {};
		\node [style=none] (257) at (1.425, 4.175) {$\bullet$};
		\node [style=none] (258) at (1.425, 3.525) {};
		\node [style=wide small box] (259) at (-0.3, 3.025) {$f$};
		\node [style=none] (260) at (0.175, 4.175) {$\bullet$};
		\node [style=none] (261) at (0.175, 3.525) {};
		\node [style=none] (262) at (-0.775, 4.3) {};
		\node [style=none] (263) at (-0.775, 3.525) {};
		\node [style=none] (271) at (1.9, 2.225) {};
		\node [style=none] (272) at (-0.3, 2.25) {};
		\node [style=none] (282) at (0.775, 0.725) {};
		\node [style=none] (210) at (-4.3, 2.05) {};
		\node [style=none] (211) at (-4.3, 1.3) {};
		\node [style=wide small box] (225) at (-4.3, 2.525) {$f$};
		\node [style=none] (226) at (-3.825, 3.825) {};
		\node [style=none] (227) at (-3.825, 3.025) {};
		\node [style=none] (228) at (-4.775, 3.825) {};
		\node [style=none] (229) at (-4.775, 3.025) {};
		\node [style=none] (314) at (-2.3, 2.425) {$=$};
		\node [style=none] (324) at (0.775, 1.425) {};
	\end{pgfonlayer}
	\begin{pgfonlayer}{edgelayer}
		\draw [style=wire] (210.center) to (211.center);
		\draw [style=wire] (226.center) to (227.center);
		\draw [style=wire] (228.center) to (229.center);
		\draw [style=wire] (255.center) to (256.center);
		\draw [style=wire] (257.center) to (258.center);
		\draw [style=wire] (260.center) to (261.center);
		\draw [style=wire] (262.center) to (263.center);
		\draw [style=wire, bend left=90, looseness=1.25] (271.center) to (272.center);
		\draw [style=wire] (210.center) to (211.center);
		\draw [style=wire] (226.center) to (227.center);
		\draw [style=wire] (228.center) to (229.center);
		\draw [style=wire] (242.center) to (271.center);
		\draw [style=wire] (282.center) to (324.center);
		\draw [style=wire] (241.center) to (272.center);
	\end{pgfonlayer}
     \end{tikzpicture}
    \end{equation}
\end{itemize}
\end{defn}

We consider a specialized version of conditional irrelevance:  $v$ is \memph{causally screened-off} from $w$ by $u$ if
\begin{equation}\label{screenoff}
\begin{tikzpicture}[xscale = .7, yscale = .45, baseline=(current  bounding  box.center)]
\begin{pgfonlayer}{nodelayer}
		\node [style=none] (242) at (0.825, 2.075) {};
		\node [style=wide small box] (254) at (0.825, 2.575) {$[w\|u]$};
		\node [style=none] (255) at (0.8, 3.85) {};
		\node [style=none] (256) at (0.8, 3.075) {};
		\node [style=none] (260) at (-1.15, 3.425) {$\bullet$};
		\node [style=none] (261) at (-1.15, 1.525) {};
		\node [style=none] (271) at (0.825, 1.275) {};
		\node [style=none] (210) at (-4.3, 3.075) {};
		\node [style=none] (211) at (-4.3, 3.825) {};
		\node [style=wide small box] (225) at (-4.3, 2.6) {$[w \| vu]$};
		\node [style=none] (226) at (-3.825, 1.3) {};
		\node [style=none] (227) at (-3.825, 2.1) {};
		\node [style=none] (228) at (-4.775, 1.3) {};
		\node [style=none] (229) at (-4.775, 2.1) {};
		\node [style=none] (314) at (-2.2, 2.425) {$=$};
	\end{pgfonlayer}
	\begin{pgfonlayer}{edgelayer}
		\draw [style=wire] (210.center) to (211.center);
		\draw [style=wire] (226.center) to (227.center);
		\draw [style=wire] (228.center) to (229.center);
		\draw [style=wire] (255.center) to (256.center);
		\draw [style=wire] (260.center) to (261.center);
		\draw [style=wire] (210.center) to (211.center);
		\draw [style=wire] (226.center) to (227.center);
		\draw [style=wire] (228.center) to (229.center);
		\draw [style=wire] (242.center) to (271.center);
	\end{pgfonlayer}
\end{tikzpicture}
\end{equation}
It is easy to show that causal screening-off is captured precisely by forward-$t$-separation.

\begin{thm}
\label{thm:main2}
$[w\| vu] = \epsilon_v\otimes [w\|u]$ if and only if $v$ is forward-$t$-separated from $w$ by $u$ in $G$.
\end{thm}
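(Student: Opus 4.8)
The plan is to mirror the proof of Theorem~\ref{thm: main1} by working directly with the canonical string diagrams $\stri_{[w \| vu]}$ and $\stri_{[w \| u]}$ furnished by Definition~\ref{gen:condi}, while tracking one pivotal difference. An atomic variable $a \in v$ is always a \emph{source} of $[w \| vu]$: since $a \in vu$, its component is the pure multiplier $\iota_{a \to \bar a}$, carrying no mechanism $\kappa_a$. In $[w \| u]$, by contrast, the same $a$ may instead occur \emph{internally}, carrying its mechanism $\kappa_a$, precisely when $a$ lies on a directed path into $w$ that avoids $u$. Forward-$t$-separation is exactly the condition that rules this out, and the two directions exploit the two sides of this dichotomy.

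For the ``if'' direction I would first show that forward-$t$-separation forces $V(G_{u \rightarrow w})$ to contain no vertex of $v$: if some $a \in v$ lay on a directed path ending in $w$ and avoiding $u$, then a sub-trek of that path would be a proper forward trek from $v$ to $w$ avoiding $u$, contradicting the hypothesis. From this I would deduce that the directed paths into $w$ avoiding $u$ coincide with those avoiding $vu$, so that $G_{vu \rightarrow w}$ is obtained from $G_{u \rightarrow w}$ by adjoining the vertices of $v$ as \emph{isolated} sources (they acquire no outgoing edge, since such an edge would again produce a forbidden proper forward trek). Consequently each component $\stri_a$ with $a \in v$ collapses to $\iota_{a \to \0} = \epsilon_a$, while every remaining component $\stri_i$ falls under the same case of Definition~\ref{gen:condi} and has the same children set as the one used to build $\stri_{[w \| u]}$, hence is literally the same morphism. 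Reassembling, and using $\epsilon_v = \bigotimes_{a \in v}\epsilon_a$ from (\ref{fins:frob}), the fused diagram is exactly $\epsilon_v \otimes \stri_{[w \| u]}$, which is the desired equality~(\ref{screenoff}).

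For the ``only if'' direction I would argue by contraposition. If $v$ is not forward-$t$-separated from $w$ by $u$, choose a proper forward trek, that is, a directed path $a \rightsquigarrow b$ in $G$ with $a \in v$, $b \in w$, no other vertex in $v \cup w$, and no vertex in $u$. Since this path ends in $w$ and avoids $u$, it lies in $G_{u \rightarrow w}$, so $a \in V(G_{u \rightarrow w}) \mi (u \cup w)$; hence the component $\stri_a$ in $\stri_{[w \| u]}$ carries the mechanism $\kappa_a$, and the trek exhibits a directed path from $\kappa_a$ to the output $b$, so $\kappa_a$ is non-quasi-terminal, $\kappa_a \in \tilde\Delta_{\stri_{[w \| u]}}$, and a fortiori $\kappa_a$ is a non-quasi-terminal node of $\epsilon_v \otimes \stri_{[w \| u]}$. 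On the other hand, in $\stri_{[w \| vu]}$ the vertex $a$ lies in the source $vu$, so no node labelled $\kappa_a$ occurs at all. The label-compatible bijection $\pi$ of the first claim of Lemma~\ref{decor:mat} therefore cannot exist, so $\stri_{[w \| vu]}$ and $\epsilon_v \otimes \stri_{[w \| u]}$ cannot represent the same morphism; that is, (\ref{screenoff}) fails.

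The routine part is the graph bookkeeping; the part that needs care is the ``if'' direction's claim that $G_{vu \rightarrow w}$ is \emph{exactly} $G_{u \rightarrow w}$ with the $v$-vertices made isolated --- in particular that no stray edge incident to a $v$-vertex survives --- since it is this that lets the non-$v$ components, together with their wirings (the $o_j = p_j$ or $o_j = p_j + 1$ matching of Definition~\ref{gen:condi}), be transported verbatim from $\stri_{[w \| u]}$. For the ``only if'' direction the crux is just the source-versus-mechanism dichotomy for $a \in v$, which converts a purely graph-theoretic trek into the presence of an essential $\kappa_a$-node on one side but not the other; once phrased this way, the first claim of Lemma~\ref{decor:mat} closes the argument immediately, exactly as in Theorem~\ref{thm: main1}.
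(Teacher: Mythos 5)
Your proof is correct and follows essentially the same route as the paper's: the ``if'' direction rests on the observation that forward-$t$-separation makes $G_{vu\rightarrow w}$ equal to $G_{u\rightarrow w}$ with the $v$-vertices adjoined as isolated points, so the canonical diagrams coincide, and the ``only if'' direction turns a $u$-avoiding proper forward trek into a string-diagrammatic obstruction via Lemma~\ref{decor:mat}. The only (harmless) difference is in the latter direction: the paper transports the resulting directed path from $\stri_{[w\|vu]}$ to the right-hand side using the path-bijection claim of Lemma~\ref{decor:mat}, whereas you invoke the node-bijection claim to note that $\kappa_a$ occurs non-quasi-terminally in $\epsilon_v\otimes\stri_{[w\|u]}$ but nowhere in $\stri_{[w\|vu]}$ --- which is exactly the style of argument the paper itself uses in Theorem~\ref{thm: main1}.
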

\begin{proof}
For the ``if'' direction, since $v$ is forward-$t$-separated from $w$ by $u$, no $i \in v$ can have children in $G_{vu \rightarrow w}$ and hence $G_{vu \rightarrow w}$ is the union of $G_{u \rightarrow w}$ and the trivial graph with vertices in $v$. It then follows from the construction of causal effects in Definition~\ref{gen:condi}  that (\ref{screenoff}) holds.

For the ``only if'' direction, suppose that (\ref{screenoff}) holds. If there is a proper forward trek from  $v$ to $w$ in $G$ that  does not contain any vertex in $u$ then, by Lemma~\ref{decor:mat}, it would translate into a directed path   on the right-hand side of (\ref{screenoff}) connecting $v$ and $w$, which is not possible.
\end{proof}

Similarly, conditional independence over causal effects is captured precisely by $t$-separation.

\begin{thm}
\label{thm:main3}
We have that $v$, $w$ are conditionally independent given $u$ over $[vw \| u]$ if and only if they are $t$-separated by $u$ in $G$.
\end{thm}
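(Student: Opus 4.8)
The plan is to recognize the statement as a clean combination of Theorems~\ref{thm: main1} and~\ref{thm:main2}, supplemented by a node-counting argument for the converse. First I would unwind what conditional independence means for $f = [vw\|u]$. By Lemma~\ref{three:cau:mar} the two marginals of $[vw\|u]$ are $(1_v\otimes\epsilon_w)\circ[vw\|u] = [v\|u]$ and $(\epsilon_v\otimes 1_w)\circ[vw\|u] = [w\|u]$, so equation~(\ref{cond:ind}) applied to $f=[vw\|u]$ reads simply
\[
[vw\|u] = ([v\|u]\otimes[w\|u])\circ\delta_u .
\]
Write $\Upsilon$ for the string diagram on the right-hand side and $\Gamma = \stri_{[vw\|u]}$; the theorem then asserts that $\Gamma$ and $\Upsilon$ present the same morphism if and only if $v$ and $w$ are $t$-separated by $u$.

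For the \emph{if} direction I would feed the two previous theorems into this reformulation. Since $t$-separation is by definition both forward- and backward-$t$-separation, Theorem~\ref{thm: main1} gives the decomposition~(\ref{effect:decom}) and Theorem~\ref{thm:main2} gives the screening-off identity $[w\|vu] = \epsilon_v\otimes[w\|u]$ from~(\ref{screenoff}). Substituting the latter into~(\ref{effect:decom}) discards the copy of $v$ that was routed into $[w\|vu]$, so the duplicate $\delta_v$ on the output of $[v\|u]$ now has one prong ending in $\epsilon_v$; counitality in~(\ref{mon:law}) collapses $(1_v\otimes\epsilon_v)\circ\delta_v$ to $1_v$, and the right-hand side of~(\ref{effect:decom}) reduces exactly to $([v\|u]\otimes[w\|u])\circ\delta_u = \Upsilon$. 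This is the conditional-independence equation, so the \emph{if} direction is complete.

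For the \emph{only if} direction I would argue by contraposition, showing that if $v$ and $w$ are not $t$-separated by $u$ then $\Gamma$ and $\Upsilon$ cannot present the same morphism. Failure of $t$-separation means there is a proper forward or backward trek from some $a\in v$ to some $b\in w$ that avoids $u$. In every case this yields a single vertex $c\notin u$ from which directed paths, all avoiding $u$, reach both some vertex of $v$ and some vertex of $w$: for a forward trek take $c = a$; for a common-cause backward trek $a\leftarrow\cdots\leftarrow k\rightarrow\cdots\rightarrow b$ take $c = k$; and for a backward trek that is a directed path $b\rightsquigarrow a$ take $c = b$. Consequently $c\in V(G_{u\rightarrow v})\mi u$ and $c\in V(G_{u\rightarrow w})\mi u$, so by the construction in Definition~\ref{gen:condi} the mechanism $\kappa_c$ occurs once inside $\stri_{[v\|u]}$ and once inside $\stri_{[w\|u]}$, hence at least twice in $\Upsilon$, whereas it occurs exactly once in $\Gamma = \stri_{[vw\|u]}$. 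Each of these occurrences lies on a directed path to the codomain and so is non-quasi-terminal. This contradicts the first clause of Lemma~\ref{decor:mat}, which supplies a label-preserving bijection $\pi : \tilde\Delta_\Gamma \fun \tilde\Delta_\Upsilon$ and therefore forces the numbers of $\kappa_c$-labelled non-quasi-terminal nodes to agree. Hence $[vw\|u]\neq\Upsilon$, as required.

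The step I expect to be the main obstacle is the uniform extraction of the vertex $c$ together with the bookkeeping that it contributes exactly one $\kappa_c$ node to $\Gamma$ but (at least) two to $\Upsilon$: one must check that in each trek type the relevant directed paths genuinely avoid $u$, so that $c$ lands in both $G_{u\rightarrow v}$ and $G_{u\rightarrow w}$, and that all the occurrences of $\kappa_c$ are non-quasi-terminal so that Lemma~\ref{decor:mat} applies to them. The remaining ingredient worth stating carefully is the reduction in the first paragraph, since it is what turns the abstract conditional-independence equation~(\ref{cond:ind}) into the concrete product form to which the counting argument and Theorems~\ref{thm: main1} and~\ref{thm:main2} can be applied.
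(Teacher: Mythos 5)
Your proof is correct. The ``if'' direction is essentially the paper's: combine Theorems~\ref{thm: main1} and~\ref{thm:main2}, substitute (\ref{screenoff}) into (\ref{effect:decom}), collapse the resulting $(1_v\otimes\epsilon_v)\circ\delta_v$ by counitality, and invoke Lemma~\ref{three:cau:mar} to identify the marginals so that the product form matches (\ref{cond:ind}). Your ``only if'' direction, however, takes a genuinely different route. The paper translates the offending trek into a directed or splitter path $\gamma$ in $\stri_{[vw\|u]}$ and then uses \emph{all three} claims of Lemma~\ref{decor:mat}: the first to show $\kappa_a$ lives only in the $[v\|u]$ factor and $\kappa_b$ only in the $[w\|u]$ factor, and the path and splitter-path bijections to force the image of $\gamma$ (hence $\gamma$ itself) through $u$. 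You instead extract a single vertex $c\notin u$ that is an ancestor of both $v$ and $w$ along $u$-avoiding directed paths (the tail of a forward trek, the top of a common-cause trek, or the $w$-endpoint of a directed backward trek), observe via Definition~\ref{gen:condi} that $\kappa_c$ then occurs once, non-quasi-terminally, in each of $\stri_{[v\|u]}$ and $\stri_{[w\|u]}$ but only once in $\stri_{[vw\|u]}$, and conclude from the first claim of Lemma~\ref{decor:mat} alone that no label-compatible bijection on $\tilde\Delta$ can exist. Your case analysis for $c$ and the check that each occurrence of $\kappa_c$ admits a directed path to the codomain (so is not quasi-terminal) are both sound. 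What your approach buys is economy---it needs only the node bijection, not the path or splitter-path bijections---at the cost of a small case split on trek types; the paper's version is more uniform across trek types but leans on the heavier machinery of Lemma~\ref{decor:mat}.
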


\begin{proof}
For the ``if'' direction, note that $t$-separation between $v$ and $w$ by $u$ entails that, on the one hand, $v$ is backward-$t$-separated from $w$ by $u$ and hence, by Theorem \ref{thm: main1}, the equality (\ref{effect:decom}) holds, and on the other hand,  $v$ is forward-$t$-separated from $w$ by $u$ and hence, by Theorem \ref{thm:main2}, the equality (\ref{screenoff}) holds. It then follows that
\begin{equation}
\begin{tikzpicture}[xscale = .65, yscale = .45, baseline=(current  bounding  box.center)]
\begin{pgfonlayer}{nodelayer}
		\node [style=none] (110) at (7.875, 4.4) {};
		\node [style=none] (157) at (7.125, 0.625) {};
		\node [style=none] (158) at (7.125, 0.05) {};
		\node [style=none] (210) at (1.9, 2.55) {};
		\node [style=none] (211) at (1.9, 1.8) {};
		\node [style=wide small box] (225) at (1.9, 3.025) {$[vw \| u]$};
		\node [style=none] (226) at (1.425, 4.325) {};
		\node [style=none] (227) at (1.425, 3.525) {};
		\node [style=none] (228) at (2.375, 4.325) {};
		\node [style=none] (229) at (2.375, 3.525) {};
		\node [style=none] (273) at (6.35, 1.45) {};
		\node [style=wide small box] (274) at (7.35, 4.725) {$ [w\|vu] $};
		\node [style=none] (275) at (7.35, 6.025) {};
		\node [style=none] (276) at (7.35, 5.15) {};
		\node [style=wide small box] (281) at (6.35, 1.95) {$[v \| u]$};
		\node [style=none] (283) at (6.325, 2.45) {};
		\node [style=none] (286) at (4.125, 2.925) {$=$};
		\node [style=none] (287) at (6.35, 1.175) {};
		\node [style=none] (288) at (7.85, 1.2) {};
		\node [style=none] (289) at (6.825, 4.2) {};
		\node [style=none] (290) at (6.325, 3.025) {};
		\node [style=none] (292) at (5.8, 3.375) {};
		\node [style=none] (293) at (6.8, 3.4) {};
		\node [style=none] (294) at (5.25, 5.5) {};
		\node [style=none] (295) at (13.375, 4.4) {};
		\node [style=none] (296) at (12.625, 0.625) {};
		\node [style=none] (297) at (12.625, 0.05) {};
		\node [style=none] (298) at (11.85, 1.45) {};
		\node [style=small box] (299) at (13.35, 4.725) {$ [w\|u] $};
		\node [style=none] (300) at (13.35, 6.025) {};
		\node [style=none] (301) at (13.35, 5.15) {};
		\node [style=wide small box] (302) at (11.85, 1.95) {$[v \| u]$};
		\node [style=none] (303) at (11.825, 2.45) {};
		\node [style=none] (304) at (9.625, 2.925) {$=$};
		\node [style=none] (305) at (11.85, 1.175) {};
		\node [style=none] (306) at (13.35, 1.2) {};
		\node [style=none] (307) at (12.325, 3.7) {$\bullet$};
		\node [style=none] (308) at (11.825, 3.025) {};
		\node [style=none] (309) at (11.3, 3.375) {};
		\node [style=none] (310) at (12.3, 3.4) {};
		\node [style=none] (311) at (10.75, 5.25) {};
		\node [style=none] (312) at (19.875, 2.875) {};
		\node [style=none] (313) at (18.875, 1.675) {};
		\node [style=none] (314) at (18.875, 1.1) {};
		\node [style=none] (315) at (17.85, 2.7) {};
		\node [style=wide small box] (316) at (19.85, 3.2) {$ [w\|u] $};
		\node [style=none] (317) at (19.85, 4.5) {};
		\node [style=none] (318) at (19.85, 3.625) {};
		\node [style=wide small box] (319) at (17.85, 3.2) {$[v \| u]$};
		\node [style=none] (320) at (17.825, 3.7) {};
		\node [style=none] (321) at (15.625, 2.925) {$=$};
		\node [style=none] (322) at (17.85, 2.425) {};
		\node [style=none] (323) at (19.85, 2.45) {};
		\node [style=none] (325) at (17.825, 4.525) {};
	\end{pgfonlayer}
	\begin{pgfonlayer}{edgelayer}
		\draw [style=wire] (157.center) to (158.center);
		\draw [style=wire] (210.center) to (211.center);
		\draw [style=wire] (226.center) to (227.center);
		\draw [style=wire] (228.center) to (229.center);
		\draw [style=wire] (275.center) to (276.center);
		\draw [style=wire] (157.center) to (158.center);
		\draw [style=wire] (157.center) to (158.center);
		\draw [style=wire] (210.center) to (211.center);
		\draw [style=wire] (226.center) to (227.center);
		\draw [style=wire] (228.center) to (229.center);
		\draw [style=wire] (275.center) to (276.center);
		\draw [style=wire, bend right=90, looseness=1.25] (287.center) to (288.center);
		\draw [style=wire] (287.center) to (273.center);
		\draw [style=wire, bend right=90, looseness=1.25] (292.center) to (293.center);
		\draw [style=wire, in=90, out=-90, looseness=1.25] (290.center) to (283.center);
		\draw [style=wire] (293.center) to (289.center);
		\draw [style=wire, in=-90, out=90] (292.center) to (294.center);
		\draw [style=wire] (110.center) to (288.center);
		\draw [style=wire] (110.center) to (288.center);
		\draw [style=wire] (296.center) to (297.center);
		\draw [style=wire] (300.center) to (301.center);
		\draw [style=wire] (296.center) to (297.center);
		\draw [style=wire] (296.center) to (297.center);
		\draw [style=wire] (300.center) to (301.center);
		\draw [style=wire, bend right=90, looseness=1.25] (305.center) to (306.center);
		\draw [style=wire] (305.center) to (298.center);
		\draw [style=wire, bend right=90, looseness=1.25] (309.center) to (310.center);
		\draw [style=wire, in=90, out=-90, looseness=1.25] (308.center) to (303.center);
		\draw [style=wire] (310.center) to (307.center);
		\draw [style=wire, in=-90, out=90] (309.center) to (311.center);
		\draw [style=wire] (295.center) to (306.center);
		\draw [style=wire] (295.center) to (306.center);
		\draw [style=wire] (313.center) to (314.center);
		\draw [style=wire] (317.center) to (318.center);
		\draw [style=wire] (313.center) to (314.center);
		\draw [style=wire] (313.center) to (314.center);
		\draw [style=wire] (317.center) to (318.center);
		\draw [style=wire, bend right=90, looseness=1.25] (322.center) to (323.center);
		\draw [style=wire] (322.center) to (315.center);
		\draw [style=wire, in=90, out=-90, looseness=1.25] (325.center) to (320.center);
		\draw [style=wire] (312.center) to (323.center);
		\draw [style=wire] (312.center) to (323.center);
	\end{pgfonlayer}
\end{tikzpicture}
\end{equation}
So, by Lemma~\ref{three:cau:mar},  $v$, $w$ are conditionally independent given $u$ over $[vw \| u]$.

For the ``only if'' direction, by Lemma~\ref{three:cau:mar} again, we may assume
\begin{equation}\label{concau:indep:free}
\begin{tikzpicture}[xscale = .6, yscale = .45, baseline=(current  bounding  box.center)]
\begin{pgfonlayer}{nodelayer}
		\node [style=none] (241) at (4.15, 2.525) {};
		\node [style=none] (242) at (1.95, 2.525) {};
		\node [style=none] (244) at (4.15, 2.25) {};
		\node [style=wide small box] (254) at (1.95, 3.025) {$[v \| u]$};
		\node [style=none] (255) at (1.975, 4.3) {};
		\node [style=none] (256) at (1.975, 3.525) {};
		\node [style=wide small box] (259) at (4.15, 3.025) {$[w \| u]$};
		\node [style=none] (262) at (4.125, 4.3) {};
		\node [style=none] (263) at (4.125, 3.525) {};
		\node [style=none] (271) at (1.95, 2.225) {};
		\node [style=none] (272) at (4.15, 2.25) {};
		\node [style=none] (282) at (3.075, 0.725) {};
		\node [style=none] (210) at (-2.1, 2.05) {};
		\node [style=none] (211) at (-2.1, 1.3) {};
		\node [style=wide small box] (225) at (-2.1, 2.525) {$[vw \| u]$};
		\node [style=none] (226) at (-2.575, 3.825) {};
		\node [style=none] (227) at (-2.575, 3.025) {};
		\node [style=none] (228) at (-1.625, 3.825) {};
		\node [style=none] (229) at (-1.625, 3.025) {};
		\node [style=none] (314) at (0, 2.425) {$=$};
		\node [style=none] (324) at (3.075, 1.425) {};
	\end{pgfonlayer}
	\begin{pgfonlayer}{edgelayer}
		\draw [style=wire] (210.center) to (211.center);
		\draw [style=wire] (226.center) to (227.center);
		\draw [style=wire] (228.center) to (229.center);
		\draw [style=wire] (241.center) to (244.center);
		\draw [style=wire] (255.center) to (256.center);
		\draw [style=wire] (262.center) to (263.center);
		\draw [style=wire, bend right=90, looseness=1.25] (271.center) to (272.center);
		\draw [style=wire] (210.center) to (211.center);
		\draw [style=wire] (226.center) to (227.center);
		\draw [style=wire] (228.center) to (229.center);
		\draw [style=wire] (242.center) to (271.center);
		\draw [style=wire] (282.center) to (324.center);
	\end{pgfonlayer}
\end{tikzpicture}
\end{equation}
Let $\pi$ be a proper forward or backward trek from $a \in v$ to $ b \in w$. If $\pi$ does not contain  any vertex in $u$ then it would translate into a directed or splitter path $\gamma$ between  $a$ and $b$ in $\stri_{[vw \| u]}$. Since $a, b \notin u$, we see that $\kappa_a$, $\kappa_b$ occur exactly once in $\stri_{[vw \| u]}$ and hence, by (\ref{concau:indep:free}) and the first claim of   Lemma~\ref{decor:mat}, $\kappa_a$ and hence $a$ do not occur in $\stri_{[w \| u]}$, whereas $\kappa_b$ and hence $b$ do not occur in $\stri_{[v \| u]}$. It follows from the other two claims of   Lemma~\ref{decor:mat} that $\gamma$ has to pass through $u$, contradiction.
\end{proof}

A merit of such theorems about the syntax category is that the sufficiency claims in them are immediately transferred to all models.

\begin{cor} \label{cor:model:do}
Let  $M$ be a Markov category and $F: \cau(G) \fun \cat M$ a strong Markov functor. Then
\begin{enumerate}
  \item If $v$ and $w$ are $t$-separated by $u$ in $G$ then, in $\cat M$, $F(v)$ and $F(w)$ are conditionally independent given $F(u)$ over $F([vw\|u])$:
  \begin{equation}\label{do-cal1}
     \begin{tikzpicture}[xscale = 1, yscale = .45, baseline=(current  bounding  box.center)]
    \begin{pgfonlayer}{nodelayer}
		\node [style=none] (241) at (-0.3, 2.525) {};
		\node [style=none] (242) at (1.9, 2.525) {};
		\node [style=wide small box] (254) at (1.9, 3.025) {$F([vw\|u])$};
		\node [style=none] (255) at (2.375, 4.3) {};
		\node [style=none] (256) at (2.375, 3.525) {};
		\node [style=none] (257) at (1.425, 4.175) {$\bullet$};
		\node [style=none] (258) at (1.425, 3.525) {};
		\node [style=wide small box] (259) at (-0.3, 3.025) {$F([vw\|u])$};
		\node [style=none] (260) at (0.175, 4.175) {$\bullet$};
		\node [style=none] (261) at (0.175, 3.525) {};
		\node [style=none] (262) at (-0.775, 4.3) {};
		\node [style=none] (263) at (-0.775, 3.525) {};
		\node [style=none] (271) at (1.9, 2.225) {};
		\node [style=none] (272) at (-0.3, 2.25) {};
		\node [style=none] (282) at (0.775, 0.725) {};
		\node [style=none] (210) at (-4.3, 2.05) {};
		\node [style=none] (211) at (-4.3, 1.3) {};
		\node [style=wide small box] (225) at (-4.3, 2.525) {$F([vw\|u])$};
		\node [style=none] (226) at (-3.825, 3.825) {};
		\node [style=none] (227) at (-3.825, 3.025) {};
		\node [style=none] (228) at (-4.775, 3.825) {};
		\node [style=none] (229) at (-4.775, 3.025) {};
		\node [style=none] (314) at (-2.3, 2.425) {$=$};
		\node [style=none] (324) at (0.775, 1.425) {};
	\end{pgfonlayer}
	\begin{pgfonlayer}{edgelayer}
		\draw [style=wire] (210.center) to (211.center);
		\draw [style=wire] (226.center) to (227.center);
		\draw [style=wire] (228.center) to (229.center);
		\draw [style=wire] (255.center) to (256.center);
		\draw [style=wire] (257.center) to (258.center);
		\draw [style=wire] (260.center) to (261.center);
		\draw [style=wire] (262.center) to (263.center);
		\draw [style=wire, bend left=90, looseness=1.25] (271.center) to (272.center);
		\draw [style=wire] (210.center) to (211.center);
		\draw [style=wire] (226.center) to (227.center);
		\draw [style=wire] (228.center) to (229.center);
		\draw [style=wire] (242.center) to (271.center);
		\draw [style=wire] (282.center) to (324.center);
		\draw [style=wire] (241.center) to (272.center);
	\end{pgfonlayer}
     \end{tikzpicture}
    \end{equation}

  \item If $v$ is backward-$t$-separated from $w$ by $u$ in $G$ then, in $\cat M$, $F([w\|uv])$ is a conditional of $F([vw\|u])$:
  \begin{equation}\label{do-cal2}
\begin{tikzpicture}[xscale = .8, yscale = .45, baseline=(current  bounding  box.center)]
\begin{pgfonlayer}{nodelayer}
		\node [style=none] (110) at (7.875, 4.4) {};
		\node [style=none] (157) at (7.125, 0.625) {};
		\node [style=none] (158) at (7.125, 0.05) {};
		\node [style=none] (210) at (1.9, 2.55) {};
		\node [style=none] (211) at (1.9, 1.8) {};
		\node [style=wide small box] (225) at (1.9, 3.025) {$F([vw \| u])$};
		\node [style=none] (226) at (1.425, 4.325) {};
		\node [style=none] (227) at (1.425, 3.525) {};
		\node [style=none] (228) at (2.375, 4.325) {};
		\node [style=none] (229) at (2.375, 3.525) {};
		\node [style=none] (273) at (6.35, 1.45) {};
		\node [style=wide small box] (274) at (7.35, 4.725) {$ F([w\|uv]) $};
		\node [style=none] (275) at (7.35, 6.025) {};
		\node [style=none] (276) at (7.35, 5.15) {};
		\node [style=wide small box] (281) at (6.35, 1.95) {$F([v \| u])$};
		\node [style=none] (283) at (6.325, 2.45) {};
		\node [style=none] (286) at (4.275, 2.925) {$=$};
		\node [style=none] (287) at (6.35, 1.175) {};
		\node [style=none] (288) at (7.85, 1.2) {};
		\node [style=none] (289) at (6.825, 4.2) {};
		\node [style=none] (290) at (6.325, 3.025) {};
		\node [style=none] (292) at (5.8, 3.375) {};
		\node [style=none] (293) at (6.8, 3.4) {};
		\node [style=none] (294) at (5.25, 5.5) {};
	\end{pgfonlayer}
	\begin{pgfonlayer}{edgelayer}
		\draw [style=wire] (157.center) to (158.center);
		\draw [style=wire] (210.center) to (211.center);
		\draw [style=wire] (226.center) to (227.center);
		\draw [style=wire] (228.center) to (229.center);
		\draw [style=wire] (275.center) to (276.center);
		\draw [style=wire] (157.center) to (158.center);
		\draw [style=wire] (157.center) to (158.center);
		\draw [style=wire] (210.center) to (211.center);
		\draw [style=wire] (226.center) to (227.center);
		\draw [style=wire] (228.center) to (229.center);
		\draw [style=wire] (275.center) to (276.center);
		\draw [style=wire, bend right=90, looseness=1.25] (287.center) to (288.center);
		\draw [style=wire] (287.center) to (273.center);
		\draw [style=wire, bend right=90, looseness=1.25] (292.center) to (293.center);
		\draw [style=wire, in=90, out=-90, looseness=1.25] (290.center) to (283.center);
		\draw [style=wire] (293.center) to (289.center);
		\draw [style=wire, in=-90, out=90] (292.center) to (294.center);
		\draw [style=wire] (110.center) to (288.center);
		\draw [style=wire] (110.center) to (288.center);
	\end{pgfonlayer}
\end{tikzpicture}
\end{equation}

  \item If $v$ is forward-$t$-separated from $w$ by $u$ in $G$ then, in $\cat M$, $F(v)$ is conditionally irrelevant to $F(w)$ given $F(u)$ over $F([w\|uv])$:
  \begin{equation}\label{do-cal3}
     \begin{tikzpicture}[xscale = .8, yscale = .45, baseline=(current  bounding  box.center)]
    \begin{pgfonlayer}{nodelayer}
		\node [style=none] (242) at (0.825, 2.075) {};
		\node [style=wide small box] (254) at (0.825, 2.575) {$F([w\|u])$};
		\node [style=none] (255) at (0.8, 3.85) {};
		\node [style=none] (256) at (0.8, 3.075) {};
		\node [style=none] (260) at (-1.15, 3.425) {$\bullet$};
		\node [style=none] (261) at (-1.15, 1.525) {};
		\node [style=none] (271) at (0.825, 1.275) {};
		\node [style=none] (210) at (-4.3, 3.075) {};
		\node [style=none] (211) at (-4.3, 3.825) {};
		\node [style=wide small box] (225) at (-4.3, 2.6) {$F([w\|uv])$};
		\node [style=none] (226) at (-3.825, 1.3) {};
		\node [style=none] (227) at (-3.825, 2.1) {};
		\node [style=none] (228) at (-4.775, 1.3) {};
		\node [style=none] (229) at (-4.775, 2.1) {};
		\node [style=none] (314) at (-2.1, 2.425) {$=$};
	\end{pgfonlayer}
	\begin{pgfonlayer}{edgelayer}
		\draw [style=wire] (210.center) to (211.center);
		\draw [style=wire] (226.center) to (227.center);
		\draw [style=wire] (228.center) to (229.center);
		\draw [style=wire] (255.center) to (256.center);
		\draw [style=wire] (260.center) to (261.center);
		\draw [style=wire] (210.center) to (211.center);
		\draw [style=wire] (226.center) to (227.center);
		\draw [style=wire] (228.center) to (229.center);
		\draw [style=wire] (242.center) to (271.center);
	\end{pgfonlayer}
     \end{tikzpicture}
    \end{equation}

  \end{enumerate}
\end{cor}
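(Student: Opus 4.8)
The plan is to observe that each of the three claims is an immediate pushforward, along the functor $F$, of the corresponding sufficiency (``if'') direction already established in Theorems \ref{thm: main1}, \ref{thm:main2}, and \ref{thm:main3}. Concretely, the hypothesis of part (1) is exactly the hypothesis of the ``if'' direction of Theorem \ref{thm:main3}, so that theorem yields the conditional-independence equation (\ref{cond:ind}), with $f = [vw\|u]$, as an identity of morphisms in $\cau(G)$. Likewise, the hypothesis of part (2) triggers the ``if'' direction of Theorem \ref{thm: main1}, giving the decomposition (\ref{effect:decom}) in $\cau(G)$, and the hypothesis of part (3) triggers the ``if'' direction of Theorem \ref{thm:main2}, giving the screening-off identity (\ref{screenoff}) in $\cau(G)$. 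So the entire content of the corollary is to transport these three identities from $\cau(G)$ to $\cat M$.

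First I would record the one structural fact that does all the work: a strong Markov functor preserves every piece of data out of which these diagrams are built---objects, identities, composition, the monoidal product $\otimes$, the symmetries $\sigma$, the duplicates $\delta$, and the discards $\epsilon$ (this is precisely what ``strong Markov functor'' means; see \cite[Definition~10.14]{fritz2020synthetic}). Consequently, applying $F$ to both sides of any equation assembled from these operations produces the same equation with $F$ inserted at every box and with each structural morphism replaced by its $\cat M$-counterpart. Applying $F$ to (\ref{cond:ind}), (\ref{effect:decom}), and (\ref{screenoff}) therefore yields exactly the displayed equations (\ref{do-cal1}), (\ref{do-cal2}), and (\ref{do-cal3}), provided the sub-boxes match up. The only point that needs a word here is the identification of the flanking boxes in the codomain diagrams: in (\ref{do-cal1}) these are the marginals of $F([vw\|u])$ over $F(v)$ and over $F(w)$, and since $F$ preserves $\epsilon$ and $\otimes$, the image under $F$ of a marginal is the marginal of the image; by Lemma \ref{three:cau:mar} these marginals are $F([v\|u])$ and $F([w\|u])$, matching the boxes produced by pushing forward the right-hand side of (\ref{cond:ind}). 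The same remark identifies $F([v\|u])$ in (\ref{do-cal2}) and $F([w\|u])$ in (\ref{do-cal3}). Note also that we never invoke the existential clauses in Definitions \ref{defn:gen:cond} and \ref{def:irre:ind} abstractly: the ``if'' directions of the three theorems hand us explicit witnesses ($[w\|uv]$, $[v\|u]$, $[w\|u]$) in $\cau(G)$, and their images under $F$ serve as the required witnesses in $\cat M$.

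The step I expect to be the only genuine (if minor) obstacle is bookkeeping: confirming that the diagram decorations in (\ref{do-cal1})--(\ref{do-cal3}) literally coincide with the $F$-images of (\ref{cond:ind}), (\ref{effect:decom}), and (\ref{screenoff}), i.e.\ that no structural morphism has been silently reassociated or reordered in a way $F$ might fail to respect. Since strictness of the monoidal structure and the coherence of the graphical calculus (used throughout Section \ref{sec:effect}) make all such reassociations into genuine equalities already inside $\cau(G)$, $F$ carries them to equalities in $\cat M$, and the three claims follow.
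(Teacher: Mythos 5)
Your proposal is correct and matches the paper's (implicit) argument exactly: the paper treats the corollary as an immediate transfer of the ``if'' directions of Theorems \ref{thm: main1}--\ref{thm:main3} along the strong Markov functor $F$, which preserves all the structural morphisms out of which the relevant equations are built. Your additional bookkeeping about marginals being preserved under $F$ is a sound (and slightly more explicit) elaboration of the same point.
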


On the other hand, the necessity claims do not hold in all models. For example, the decomposition property (\ref{do-cal2}) always holds in deterministic structural equation models (as functors from $\cau(G)$ to $\cat{Set}$), regardless of backward-$t$-separation. The necessity in question is necessity for validity (``true in all models''), rather than truth in particular models.

\section{The causal core of the do-calculus}
\label{sec:cal}
Corollary \ref{cor:model:do} is particularly interesting because its three clauses correspond to the three rules in Pearl's do-calculus, respectively. Suppose $\cat M = \cat{FinStoch}$, and $F$ sends each causal mechanism $\kappa_v$ to a positive stochastic matrix, so that it gives rise to a causal Bayesian network (CBN) model \cite{Fong:thesis}, in which the pre-intervention joint probability distribution is positive (which is assumed by Pearl's do-calculus). Let $\mathbf{X}$ denote the set of random variables represented by the object $u$, $\mathbf{Y}$ by $w$, and $\mathbf{Z}$ by $v$. Then equations (\ref{do-cal1})-(\ref{do-cal3}) can be reformulated as follows.\footnote{In $\cat{FinStoch}$, we can simply use nonzero natural numbers as the objects, and the morphisms are stochastic matrices: a morphism $n\fun m$ is a $m\times n$ stochastic matrix. A discard morphism $\epsilon_n: n\fun 1$ is the $1\times n$ stochastic matrix in which each entry is $1$, and a duplicate morphism $\delta_n : n \fun n^2$ is the $n^2\times n$ stochastic matrix in which $(\delta_n)^{(i - 1)n + i}_{i} = 1$ (and other entries are zero.) The composition of morphisms is given by matrix multiplication, and the monoidal product is given by the Kronecker product of matrices \cite{Fong:thesis,fritz2020synthetic}. As shown in \cite{Fong:thesis}, $F([v_G])$, the image of the exogenous effect on $v_G$ in $\cat{FinStoch}$, is a stochastic matrix (in fact, a column vector) encoding a joint probability distribution over the set of random variables $\mathbf(V)$ represented by $v_G$ that satisfies the factorization in (\ref{fac:pre}). His argument can be generalized to show that $F([v\|u])$ is a stochastic matrix encoding the distributions over the random variables represented by $v$ given that those represented by $u$ are intervened to take various values, according to the intervention principle (\ref{fac:post}). Note that equations (\ref{prob-docal-1})-(\ref{prob-docal-3}) are understood as holding for all values of $\mathbf{X}, \mathbf{Y}, \mathbf{Z}$, and so express equality between specific entries in the relevant matrices.}

Equation (\ref{do-cal1}) is rendered as:
\begin{equation}
\label{prob-docal-1}
P(\mathbf{Y}, \mathbf{Z} | do(\mathbf{X})) = P(\mathbf{Y} | do(\mathbf{X}))P(\mathbf{Z} | do(\mathbf{X})).
\end{equation}

Equation (\ref{do-cal2}) is rendered as:
\begin{equation}
\label{prob-docal-2}
P(\mathbf{Y}, \mathbf{Z} | do(\mathbf{X})) = P(\mathbf{Z}| do(\mathbf{X}))P(\mathbf{Y} | do(\mathbf{X}), do(\mathbf{Z})).
\end{equation}

Equation (\ref{do-cal3}) is rendered as:  \begin{equation}
\label{prob-docal-3}
P(\mathbf{Y}| do(\mathbf{X}), do(\mathbf{Z})) = P(\mathbf{Y}| do(\mathbf{X})).
\end{equation}

Note in addition that by the probability calculus and the assumed positivity, (\ref{prob-docal-1}) is equivalent to \[
P(\mathbf{Y} | do(\mathbf{X}), \mathbf{Z}) = P(\mathbf{Y} | do(\mathbf{X})),
\]
and (\ref{prob-docal-2}) is equivalent to \[
P(\mathbf{Y} | do(\mathbf{X}), do(\mathbf{Z})) = P(\mathbf{Y} | do(\mathbf{X}), \mathbf{Z}),
\] because by the chain rule of the probability calculus, $P(\mathbf{Y}, \mathbf{Z} | do(\mathbf{X})) = P(\mathbf{Z}| do(\mathbf{X}))P(\mathbf{Y} | do(\mathbf{X}), \mathbf{Z})$.

The upshot is that Corollary \ref{cor:model:do}, when applied to a CBN model based on $G$ (with a positive or regular pre-intervention distribution), entails the following rules.
\begin{itemize}
    \item[{\it Rule 1}] (Insertion/deletion of observations): if $\mathbf{Y}$ and $\mathbf{Z}$ are $t$-separated by $\mathbf{X}$ in $G$, then \[ P(\mathbf{Y} | do(\mathbf{X}), \mathbf{Z}) = P(\mathbf{Y} | do(\mathbf{X})). \]

    \item[{\it Rule 2}] (Action/observation exchange): if $\mathbf{Z}$ is backward-$t$-separated from $\mathbf{Y}$ by $\mathbf{X}$ in $G$, then \[ P(\mathbf{Y} | do(\mathbf{X}), do(\mathbf{Z})) = P(\mathbf{Y} | do(\mathbf{X}), \mathbf{Z}). \]

    \item[{\it Rule 3}] (Insertion/deletion of actions): if $\mathbf{Z}$ is forward-$t$-separated from $\mathbf{Y}$ by $\mathbf{X}$ in $G$, then \[ P(\mathbf{Y} | do(\mathbf{X}), do(\mathbf{Z})) = P(\mathbf{Y} | do(\mathbf{X})). \]

\end{itemize}

Pearl's do-calculus \cite{Pearl1995} consists of exactly three rules like these, but each rule therein is more general than the corresponding rule above and is formulated in terms of the more complicated $d$-separation criterion and various modifications of $G$. The extra generality in Pearl's version is that the consequent equation in each rule has an extra set of variables $\mathbf{W}$ to be conditioned upon on both sides of the equation. For example, the consequent equation in the first rule of Pearl's do-calculus is \[ P(\mathbf{Y} | do(\mathbf{X}), \mathbf{Z}, \mathbf{W}) = P(\mathbf{Y} | do(\mathbf{X}), \mathbf{W}), \]
and similarly for the other two rules. It is a simple exercise to check that each of the rules above is exactly equivalent to the corresponding rule in Pearl's calculus when $\mathbf{W}$ is taken to be empty.

So Corollary \ref{cor:model:do}, when applied to a CBN model, yields a specialized version of Pearl's do-calculus. However, although each rule in the specialized version is a special case of the corresponding rule in the full version, taken together they are actually as strong as the full version. To see this, it suffices to show that we can recover the intervention principle (\ref{fac:post}) from the specialized rules, or to be more exact, from Rule 2 and Rule 3 above, since Rule 1 is entailed by the conjunction of Rule 2 and Rule 3 (just as in the full version, see \cite{HV2006}.) Since the full version is entailed by the intervention principle (plus the probability calculus), it is also entailed by the specialized version (plus the probability calculus) if the intervention principle is entailed by the specialized version (plus the probability calculus).

We now sketch a fairly simple argument to that effect. It helps to first consider the pre-intervention case, where we need to show that Rule 2 and Rule 3 entail that the pre-intervention probability distribution factorizes as in (\ref{fac:pre}). This is equivalent to deriving the local Markov condition from Rule 2 and Rule 3, the condition that every variable is probabilistically independent of its non-descendants conditional on its parents \cite{SGS2000}. The derivation is straightforward. For any random variable $V\in \mathbf{V}$ in the CBN, by Rule 2, we have
\[
P(V | do(\pa(V))) = P(V | \pa(V)),
\]
and
\[
P(V | do(\nd(V))) = P(V | \nd(V)),
\]
where $\pa(V)$ and $\nd(V)$ denote the set of $V$'s parents in $G$ and the set of $V$'s non-descendants in $G$ (i.e., those variables of which $V$ is not an ancestor), respectively. This is so because $\pa(V)$ is trivially backward-$t$-separated from $V$ by the empty set (for there is no proper backward trek from $\pa(V) $ to $V$), and so is $\nd(V)$, which contains $\pa(V)$ as a subset. Then by Rule 3, we have
\[
P(V | do(\nd(V))) = P(V | do(\pa(V))),
\]
simply because every forward trek to $V$ contains a parent of $V$. It follows that for each $V$, \[
P(V | \nd(V)) = P(V | \pa(V)).
\]
So the factorization required by the intervention principle holds in the pre-intervention case. This argument easily generalizes to any post-intervention probability distribution $P(\mathbf{V}|do(\mathbf{T}))$; that is, we can derive in the same fashion from Rule 2 and Rule 3 that for every $V$,
\[
P(V | \nd^*(V), do(\mathbf{T})) = P(V | \pa^*(V), do(\mathbf{T})),
\]
where $\pa^*(V)$ and $\nd^*(V)$ denote the set of $V$'s parents and the set of $V$'s non-descendants, respectively, in the subgraph of $G$ in which all arrows into variables in $\mathbf{T}$ are deleted.
From this follows the factorization of $P(\mathbf{V}|do(\mathbf{T}))$ required by the intervention principle (\ref{fac:post}).

Therefore, the full do-calculus can in principle be derived from the specialized do-calculus together with the probability calculus. This fact reveals an equivalent formulation of the do-calculus for CBN models that is simpler than the standard formulation. More importantly, this simpler formulation reflects the ``causal core'' of the do-calculus, for it is an instance of the generic do-calculus given in Corollary \ref{cor:model:do}, and the generic do-calculus is derived from results in a syntax category that is, so to speak, purely causal (because there is no morphism in that category to match non-causal or evidential relations between variables.) More precisely, the causal core of the do-calculus is given by the rule for causal decomposition ((\ref{do-cal2}), rendered as (\ref{prob-docal-2}) in a CBN model) and the rule for causal screening-off ((\ref{do-cal3}), rendered as (\ref{prob-docal-3}) in a CBN model). These are derived without any consideration of the non-causal features of the models. The standard do-calculus for the CBN models can be seen as derived from a conjunction of these two rules on the one hand, which are purely causal, and the probability calculus on the other hand, which is non-causal.

\section{Conclusion}
\label{sec:con}
Following the pioneering work of \cite{Fong:thesis}, we studied the causal effect morphisms in a causal DAG-induced free Markov category in some depth, and established sufficient and necessary graphical conditions for some conceptually important properties of such morphisms, including especially decomposition and screening-off. Our results yield a generic do-calculus that is more general and abstract than the standard do-calculus in the context of causal Bayesian networks. Not only is the generic do-calculus more widely applicable, it is also conceptually illuminating in that it reveals the purely causal component of the do-calculus. When applied to causal Bayesian networks, it also results in a simpler but equivalent formulation of the probabilistic do-calculus.

Since the simpler do-calculus uses trek-separation rather than the more convoluted d-separation, it is probably easier to explain and understand.  Moreover, the simpler do-calculus may also be readily extendable to other causal graphical models derived from DAG models. For example,  in \cite{Zhang2008}, Pearl's do-calculus is extended to the so-called partial ancestral graphs (PAGs), which are used to represent Markov equivalence classes of DAG models.  The extension is intended to capture the applicability of a do-calculus rule in all DAGs in the equivalence class represented by a PAG, but due to the complex graphical conditions in Pearl's do-calculus,  it only accommodates some but not all such cases of unanimous applicability.  We suspect that an extension of the simpler do-calculus highlighted in this paper would be more straightforward and complete.

\section*{Acknowledgements}
This research was supported by the Research Grants Council of Hong Kong
under the General Research Fund 13602818.

\providecommand{\bysame}{\leavevmode\hbox to3em{\hrulefill}\thinspace}
\providecommand{\MR}{\relax\ifhmode\unskip\space\fi MR }
% \MRhref is called by the amsart/book/proc definition of \MR.
\providecommand{\MRhref}[2]{%
  \href{http://www.ams.org/mathscinet-getitem?mr=#1}{#2}
}
\providecommand{\href}[2]{#2}

\end{document}